\documentclass[11pt]{article}
\usepackage{microtype}

\usepackage{amsmath,amssymb}
\usepackage{amsfonts,bm,bbm} 
\usepackage{amssymb,amsthm,enumitem}
\usepackage{tikz}
\usetikzlibrary{positioning,arrows.meta,fit,calc}
\usetikzlibrary{decorations.pathmorphing,positioning,patterns,snakes}
\usepackage{multirow}
\usepackage{xcolor,graphicx,float,booktabs}
\usepackage[most]{tcolorbox}
\usepackage{fvextra}
\usepackage{setspace}
\usetikzlibrary{shadows, backgrounds}

\tcbuselibrary{listings,breakable,skins}
\usepackage{tabularx}
\usepackage{makecell}
\usepackage{booktabs}
\newcolumntype{L}[1]{>{\raggedright\arraybackslash}m{#1}}
\newcolumntype{C}[1]{>{\centering\arraybackslash}m{#1}}
\newcolumntype{R}[1]{>{\raggedleft\arraybackslash}m{#1}}

\usepackage{epstopdf}

\usepackage{verbatim} 
\allowdisplaybreaks[4]
\usepackage{authblk}
\usepackage{cite}
\usepackage{hyperref}
\hypersetup{hypertex=true,
colorlinks=true,
linkcolor=blue,
anchorcolor=blue,
citecolor=blue}

\usepackage[round]{natbib}
\usepackage[
margin=1in,
includefoot,
footskip=30pt,
]{geometry}

\usepackage[algo2e,ruled,vlined]{algorithm2e}
\usepackage{subfig}
\usepackage{graphicx}

\newcommand{\E}{\mathbb{E}}

\newcommand{\1}{\mathbbm{1}}

\newtheorem{theorem}{Theorem}[section]

\numberwithin{equation}{section}
\definecolor{codeblue}{rgb}{0.1,0.2,0.6}
\definecolor{codegreen}{rgb}{0.0,0.5,0.0}
\definecolor{codered}{rgb}{0.6,0.1,0.1}
\definecolor{codegray}{rgb}{0.45,0.45,0.45}
\definecolor{codebg}{rgb}{0.97,0.97,0.97}

\lstdefinestyle{pythonstyle}{
  language=Python,
  basicstyle=\ttfamily\small,
  backgroundcolor=\color{codebg},
  keywordstyle=\color{codeblue}\bfseries,
  stringstyle=\color{codered},
  commentstyle=\color{codegreen}\itshape,
  numberstyle=\tiny\color{codegray},
  breaklines=true,
  breakatwhitespace=true,
  showstringspaces=false,
  frame=single,
  rulecolor=\color{black!25},
  tabsize=2,
  columns=fullflexible,
  keepspaces=true,
  xleftmargin=1.2em
}
\definecolor{myblue}{RGB}{55,102,170}
\definecolor{myred}{RGB}{198,40,40}
\definecolor{mygreen}{RGB}{35,130,70}
\definecolor{myorange}{RGB}{220,120,20}
\definecolor{lightblue}{RGB}{240,246,255}
\definecolor{lightred}{RGB}{253,239,239}
\definecolor{lightgreen}{RGB}{238,248,240}
\definecolor{lightorange}{RGB}{255,247,235}

\newcommand{\badhl}[1]{\colorbox{lightred}{\textcolor{myred}{\textbf{#1}}}}
\newcommand{\goodhl}[1]{\colorbox{lightgreen}{\textcolor{mygreen}{\textbf{#1}}}}
\newcommand{\key}[1]{{\color{myblue}\textbf{#1}}}

\newtcolorbox{nicebox}[3][]{
  breakable,
  colback=#2,
  colframe=myblue,
  colbacktitle=myblue,
  coltitle=white,
  title={#3},
  fonttitle=\bfseries,
  boxrule=0.8pt,
  arc=2mm,
  left=2mm,
  right=2mm,
  top=1mm,
  bottom=1mm,
  #1
}

\title{Uncertainty-Aware Simulation-Based Inference for Operations Research with Large Language Models}
\author[1]{Guo Liang}
\author[2]{Shaochong Lin}
\author[3]{Zuo-Jun Max Shen}
\author[4]{Kun Zhang}

\affil[1]{Institute of Statistics and Big Data, Renmin University of China, Beijing 100872, China. liangguo000221@ruc.edu.cn}

\affil[2]{Department of Data and Systems Engineering, The University of Hong Kong, Hong Kong 999077, China. shaoclin@hku.hk}

\affil[3]{Faculty of Engineering \& Faculty of Business and Economics, The University of Hong Kong 999077, Hong Kong, China. maxshen@hku.hk}

\affil[4]{School of Information, Renmin University of China, Beijing 100872, China. kunzhang@ruc.edu.cn}

\date{}

\begin{document}
\normalsize
\maketitle

\begin{abstract}
	Deploying large language models (LLMs) for operations research (OR) tasks remains challenging because correctness depends on a coherent modeling process, not merely a correct final answer. Standard autoregressive generation operates on a myopic policy, which sometimes fails to anticipate whether a partial formulation can be validly extended into a globally consistent optimization model. Consequently, locally plausible steps may propagate into catastrophic downstream formulation or solver code errors. To address this, we propose an uncertainty-aware, training-free inference framework for OR mathematical modeling. Without updating model parameters, our method evaluates intermediate candidate steps using short lookahead simulations to quantify downstream predictive uncertainty or probability concentration. Candidates that demonstrate a higher likelihood of yielding coherent mathematical formulations are then dynamically selected via importance resampling. Empirical evaluations across multiple OR benchmarks (including NL4OPT, MAMO, and IndustryOR) demonstrate that our framework consistently outperforms both standard and low-temperature baselines, establishing an efficient, training-free paradigm for reliable OR formulation generation.
	
\emph{Key words}: LLMs; uncertainty-aware; reward function; importance resampling.
\end{abstract}

\section{Introduction}

Operations research (OR) translates complex decision problems into structured mathematical formulations consisting of variables, constraints and objectives, whose structural integrity directly governs the quality of high-stakes resource allocation, scheduling and planning decisions. In the past, this translation process has been hindered by the scarcity of expert modelers capable of reliably reconstructing coherent optimization models from natural language requirements. While the recent advent of large language models (LLMs) offers a promising avenue to automate this pipeline, their underlying generation paradigm remains inherently constrained by token-by-token\footnote{In a language model, a token is a text unit that the model generates at each step. Figure \ref{fig:autoregressive_generation} illustrates how tokens are generated in an LLM.} autoregression, which evaluates each subsequent step based solely on immediate local probabilities. Consequently, a sequence of individually plausible choices, such as introducing an isolated constraint or an unindexed variable, frequently compounds into a globally inconsistent formulation characterized by conflicting constraints or undefined decision domains. The fundamental structural difficulty lies in this misalignment that the correctness of a mathematical model is a global property of the entire formulation, whereas the generation policy that constructs it operates locally, meaning that local optimality provides no guarantee of global coherence.

When generation is unstable, a standard mitigation strategy is low-temperature sampling\footnote{Low-temperature sampling is also called low-temperature decoding in the LLM literature. In this paper, we employ ``sampling'' to guide the generation process of the LLM.}, which reduces the temperature parameter to sharpen the local distribution and induces more deterministic commitments to the most probable choices at each step \citep{chen2021evaluatinglargelanguagemodels}. This heuristic is well-motivated for tasks where correctness depends primarily on individual token accuracy, such as translation fluency and basic syntactic validation. In those domains, higher local probability often correlates with higher correctness, and suppressing low-probability alternatives effectively reduces random variation. However, in OR modeling, the dominant failure mode stems from a fundamentally different source. A generated formulation must satisfy global consistency requirements, meaning that variables must be defined before use, constraints must reference the correct index sets, and objective functions must align with decision domains. A high-probability token is not inherently more capable of preserving these properties downstream than a less probable alternative. When an early modeling error lies on a highly probable path, low-temperature sampling does not avoid the mistake, but rather arrives at it more deterministically. Consequently, temperature reduction addresses the wrong source of variation by controlling token-level statistical uncertainty while leaving formulation-level structural uncertainty completely untouched.


Unlike low-temperature sampling, which operates strictly at the token level, reinforcement learning with verifiable rewards (RLVR) evaluates the generation process against a trajectory-level objective \citep{guo2025deepseek,hu2025open}. By shifting probability mass toward candidate paths that optimize a global reward, this approach reweights intermediate choices according to their downstream potential rather than their immediate local probability. In this sense, reinforcement learning addresses the correct structural problem by evaluating partial formulations based on their long-horizon future consequences. However, deploying this paradigm for OR tasks encounters two substantial obstacles. The first is practical. Evaluating mathematical formulations typically requires dense solver integration to verify feasibility, optimality, and constraint satisfaction. This makes automated feedback computationally expensive, sparse, and difficult to design without introducing unintended incentives during iterative training \citep{rafailov2023direct}. The second obstacle is more fundamental. Recent evidence suggests that post-training reinforcement learning primarily functions by reallocating probability mass toward reasoning paths already available within the base model distribution, rather than injecting entirely new capabilities \citep{yue2025does,karan2025reasoning}. 
These practical and conceptual limitations underscore the need for a training-free alternative that can explicitly evaluate downstream formulation coherence at inference time \footnote{Note that in the context of LLMs, ``inference time'' refers to the operational phase where an LLM generates outputs from input prompts. This should be distinguished from ``inference'' in statistics, which typically involves drawing conclusions about population parameters or testing hypotheses based on observed data.} without requiring model retraining, external reward models, or parameter updates.


In this paper, we develop such an alternative through an uncertainty-aware inference framework that combines lookahead evaluation with importance resampling. Instead of updating model parameters, the framework shifts probability during generation by using short future simulations to assess the reliability of the current partial formulation. The intuition is that a well-aligned partial formulation should lead to future extensions that remain concentrated around compatible variables, constraints, and code structures, whereas a misaligned one is more likely to branch into inconsistent or unstable downstream formulations. We formalize this intuition through two lookahead-based reward functions: one based on downstream probability concentration and the other on predictive uncertainty. By estimating these rewards through multiple short rollouts from the base model and applying importance resampling, the framework assigns greater selection probability to candidates with higher structural reliability. In this way, it improves OR generation without additional supervision, external reward models, or gradient-based updates. Specifically, we make the following main contributions in this paper.

\begin{enumerate}
\item {\textbf{Examples of myopic policy failures in OR modeling.}} We establish that local plausibility does not guarantee the global structural integrity required for valid OR formulations. Through two complementary analyses, we demonstrate that locally probable modeling choices can propagate into globally invalid formulations and solver errors -- a risk that persists even under low-temperature sampling, which often increases the model's confidence in its early mistakes. By quantifying the path dependency of OR reasoning, we find that correct final answers almost exclusively originate from flawless intermediate reasoning chains with negligible recovery from early errors, a characteristic that necessitates a shift toward trajectory-level assessment.

\item {\textbf{Efficient inference-time framework that integrates multi-path rollouts with importance resampling.}} This approach introduces two reward functions (power reward and entropy reward) to assess partial formulations by their downstream probability concentration and predictive uncertainty, capturing the intuition that well-aligned modeling steps lead to more coherent future continuations. By integrating these rewards into an importance resampling procedure, the framework dynamically reallocates probability mass toward reliable reasoning paths during generation without requiring model parameter updates or external reward models, ensuring an inference overhead that remains practically scalable for complex deployment.

\item {\textbf{Empirical validation across diverse OR benchmarks.}} We provide extensive empirical evidence of the framework’s effectiveness across multiple representative OR benchmarks, including NL4OPT, MAMO, and IndustryOR. Our results demonstrate that the proposed method consistently outperforms standard sampling and competitive low-temperature baselines across various levels of problem complexity. Notably, the framework yields the most significant performance gains in highly constrained optimization scenarios, implying that uncertainty-aware trajectory evaluation is particularly vital for solving sophisticated, industrial-scale modeling tasks where structural consistency is paramount.
\end{enumerate}

The remainder of this paper is structured as follows. Section \ref{sec:review} reviews the related literature on applying LLMs to optimization modeling, uncertainty in text generation, and inference-time intervention methods. Section \ref{sec:background} introduces the background of LLM generation and motivates our approach by demonstrating the insufficiency of myopic sampling policies in highly path-dependent OR tasks. Section \ref{sec:method} formally presents our uncertainty-aware inference framework, detailing the lookahead-based reward design and the importance resampling procedure. Section \ref{sec:numerical} reports the numerical experiments and empirical results across multiple OR benchmarks. Finally, Section \ref{sec:conclusion} concludes the paper and discusses potential directions for future research. All mathematical proofs and additional generation examples are provided in the appendices.

\section{Literature Review}\label{sec:review}

\subsection{LLMs for Optimization Modeling}

The application of language models to OR has progressed from auto-formulation systems to more specialized agentic and training-based frameworks. Early work translates problem descriptions into optimization models and establishes NL4Opt as a benchmark for this task \citep{ramamonjison2022augmenting,ramamonjison2023nl4opt}. To handle more complex instances, OptiMUS decomposes modeling into formulation, solver-code generation, debugging, and testing \citep{ahmaditeshnizi2024optimus}, while Chain-of-Experts introduces specialized agents with planning and reflection \citep{xiao2024chain}. More recently, lightweight few-shot and tool-use frameworks have been developed for large-scale optimization modeling without task-specific fine-tuning \citep{liang2026leanllmopt}.

Another stream relies on benchmarks, synthetic data, and model training. The MAMO benchmark evaluates mathematical modeling ability with solver-verifiable instances \citep{huang2024mamo}, while the OptiBench benchmark measures optimization modeling over diverse problem types and introduces ReSocratic data synthesis for supervised fine-tuning \citep{yang2024optibench}. Along this line, the ORLM framework improves open-source models through semi-automated data synthesis and instruction tuning \citep{huang2025orlm}, and the LLMOPT framework further adds multi-instruction tuning and self-correction for general optimization problems \citep{jiang2025llmopt}. Recent test-time reinforcement learning methods further improve modeling and solving performance with fewer synthetic examples \citep{ding2026orr1}. Despite these advances, complex constraint construction remains a major bottleneck as task complexity increases \citep{chen2026optengine}. Unlike the above studies, which mainly improve OR modeling through agents or training, our method keeps the base model fixed and focuses on controlling intermediate formulation choices at inference time.

\subsection{Uncertainty in Language Model Generation}

Uncertainty estimation is widely used to assess the reliability of LLM outputs. Calibration studies show that language models can be poorly calibrated in question answering \citep{jiang2021calibration}. Larger models can sometimes estimate whether their own proposed answers are true, though this ability depends on task and prompting format \citep{kadavath2022language}. Beyond answer-level confidence, recent inference-time methods use uncertainty to allocate computation. For example, Entropy-Tree branches only at high-entropy reasoning positions \citep{wei2025entropy}, foresight-based decoding uses future simulations to evaluate candidate steps \citep{zhao2024phidecoding} and adaptive confidence sampling relies on answer stability to guide resampling and termination \citep{shi2025caws}.

Another related direction measures semantic rather than token-level uncertainty. Semantic entropy groups meaning-equivalent generations and improves correctness prediction \citep{kuhn2023semantic}. The same idea has also been used to detect hallucinations at scale \citep{farquhar2024detecting}. However, semantic uncertainty usually requires entailment models, LLM judges, or clustering over generated meanings. 

\subsection{Inference-Time Intervention and Resampling}

Inference-time intervention methods modify generation without changing model parameters. Guided and contrastive decoding intervene at the token-probability level. For example, the FUDGE method uses future discriminators to steer generation toward desired attributes \citep{yang2021fudge}, the DExperts method combines expert and anti-expert distributions through product-of-experts \citep{liu2021dexperts}, and the DoLa method contrasts logits from different layers to improve factuality \citep{chuang2023dola}. These methods mainly target controllability or factuality, while our setting requires maintaining mathematical consistency across a full OR formulation.

Other methods generate multiple reasoning paths and select or reweight outputs. The self-consistency method aggregates complete chains of thought by majority vote \citep{wang2023selfconsistency}, while the confidence-guided early stopping method reduces the required number of samples by stopping once posterior confidence is high \citep{aghazadeh2025cges}. More generally, the twisted sequential Monte Carlo framework provides a potential-based view of trajectory reweighting \citep{zhao2024probabilistic}. Process reward models instead evaluate intermediate reasoning steps, but typically require human-labeled process data or a trained verifier \citep{lightman2024lets}. Our framework is closest to process-level steering, but derives the steering signal from short lookahead rollouts of the frozen base model rather than from final-output aggregation or an external reward model.

\section{Background and Motivation}\label{sec:background}

\subsection{Background}
Let $\mathcal{X}$ be a finite vocabulary of tokens, where a token is the basic text unit used by the language model. Given an input prompt $c$, a generated response can be represented as a sequence of tokens $\mathbf{x}=x_{1:T}=(x_1,\ldots,x_T)\in\mathcal{X}^T$, where each $x_t\in\mathcal{X}$. Modern large language models (LLMs) generate such sequences by assigning a probability to each next token conditional on both the prompt and the tokens already generated. That is, they define an autoregressive distribution over $\mathcal{X}^T$ by the factorization $p(\mathbf{x} | c)=\prod_{t=1}^{T}p(x_t | c,x_{<t})$, where $x_{<t}=(x_1,...,x_{t-1})$ denotes the generated prefix before step $t$. Similar notation is used for $x_{>t}$, $x_{\leq t}$, and $x_{\geq t}$. The generation procedure of an LLM is shown in Figure \ref{fig:autoregressive_generation}. For open-source models, the conditional distribution $p(\cdot | c, x_{1:t-1})$ at each step is directly accessible. When the prompt $c$ is clear from context, we write $p(\mathbf{x})$ as shorthand for $p(\mathbf{x} | c)$. Throughout this paper, $p(\mathbf{x})$ denotes the base language model after supervised fine-tuning (SFT) on OR tasks. We assume that this model already captures substantial task-relevant knowledge and reasoning patterns, and our goal is to improve how such knowledge is utilized during generation.

\begin{figure}[t]
\centering

\newcommand{\distA}{%
\begin{tikzpicture}[x=0.11cm,y=0.11cm,baseline=-0.45ex]
\draw[-,thin,smooth] plot coordinates {
(0,0) (0.5,0.1) (1.0,0.8) (1.5,2.0) (2.0,0.9) (2.6,0.2) (3.0,0)
};
\end{tikzpicture}}

\newcommand{\distB}{%
\begin{tikzpicture}[x=0.11cm,y=0.11cm,baseline=-0.45ex]
\draw[-,thin,smooth] plot coordinates {
(0,0) (0.4,0.4) (0.9,1.4) (1.3,1.8) (1.7,1.3) (2.2,0.4) (2.8,0)
};
\end{tikzpicture}}

\newcommand{\distC}{%
\begin{tikzpicture}[x=0.11cm,y=0.11cm,baseline=-0.45ex]
\draw[-,thin,smooth] plot coordinates {
(0,0) (0.6,0.05) (1.0,0.5) (1.4,1.6) (1.8,2.1) (2.3,0.7) (2.8,0)
};
\end{tikzpicture}}

\newcommand{\distD}{%
\begin{tikzpicture}[x=0.11cm,y=0.11cm,baseline=-0.45ex]
\draw[-,thin,smooth] plot coordinates {
(0,0) (0.3,0.2) (0.8,1.2) (1.2,1.0) (1.6,1.9) (2.1,0.8) (2.7,0)
};
\end{tikzpicture}}

\begin{tikzpicture}[
    >=Latex,
    font=\small,
    box/.style={
        draw,
        rounded corners,
        thick,
        minimum height=9mm,
        minimum width=24mm,
        align=center,
        fill=gray!5
    },
    flow/.style={->, thick},
    proc/.style={draw=blue, dashed, rounded corners, thick, inner sep=6mm},
    toplabel/.style={font=\scriptsize, align=center},
    bottomlabel/.style={font=\scriptsize, align=center}
]

\node[box, minimum width=24mm] (prompt) at (-0.5,0) {Prompt: $c$};
\node[box, minimum width=24mm] (answer) at (-0.5,-3.8) {Answer\\ $x_{1:T}$};

\node[box, minimum width=22mm] (s1) at (5.0,0) {$c,x_1$};
\node[box, minimum width=25mm] (s2) at (11.0,0) {$c,x_{1:2}$};

\node[box, minimum width=25mm] (s3) at (11.0,-3.8) {$c,x_{1:T-2}$};
\node[box, minimum width=27mm] (s4) at (5.0,-3.8) {$c,x_{1:T-1}$};

\draw[flow] (prompt.east) -- (s1.west)
node[midway, above=0.6mm, toplabel] {\distA}
node[midway, below=0.8mm, bottomlabel] {$x_1 \sim p(\cdot | c)$};

\draw[flow] (s1.east) -- (s2.west)
node[midway, above=0.6mm, toplabel] {\distB}
node[midway, below=0.8mm, bottomlabel] {$x_2 \sim p(\cdot | c,x_1)$};

\draw[flow] (s2.south) -- ++(0,-1.0);
\draw[flow] ($(s3.north)+(0,1.0)$) -- (s3.north);
\node[font=\small] at ($(s2.south)!0.45!(s3.north)$) {$\vdots$};

\draw[flow] (s3.west) -- (s4.east)
node[midway, above=0.6mm, toplabel] {\distC}
node[midway, below=0.8mm, bottomlabel] {$x_{T-1} \sim p(\cdot | c,x_{1:T-2})$};

\draw[flow] (s4.west) -- (answer.east)
node[midway, above=0.6mm, toplabel] {\distD}
node[midway, below=0.8mm, bottomlabel] {$x_T \sim p(\cdot | c,x_{1:T-1})$};

\end{tikzpicture}

\caption{How an LLM generates text step by step. Starting from the input question (prompt) $c$, the model repeatedly predicts a distribution over the next possible text unit (token), samples one, appends it to the current prefix (specifically, at step $t$, prefix is $c,x_{1:t-1}$), and continues until the full answer $x_{1:T}$ is produced.}
\label{fig:autoregressive_generation}
\end{figure}

In practice, a common way to stabilize LLM reasoning is low-temperature sampling. At each step, the base model assigns a probability distribution $p(\cdot | x_{<t})$ over possible next tokens. Temperature sampling rescales this distribution as
\begin{align*}
	p_\tau(x_t | x_{<t}) = \frac{p(x_t | x_{<t})^{1/\tau}}{\sum_{a\in\mathcal{X}} p(a | x_{<t})^{1/\tau}},
\end{align*}
where $\tau > 0$ denotes the temperature parameter. When $\tau=1$, this recovers the original model distribution. When $\tau<1$, the distribution becomes more concentrated on tokens that already have high probability under the base model, thereby reducing randomness in generation.

\subsection{Insufficiency of Myopic Policy}\label{sec:myopic_insuff}

Although low-temperature sampling (which constitutes a myopic policy, as it relies solely on the local probability $p(x_t|x_{<t})$ while remaining agnostic to the downstream information at subsequent steps $t+1,...,T$) is widely adopted as a strong baseline, its inherent preference for high-probability tokens is poorly aligned with the requirements of OR problem solving. In OR tasks, correctness depends not on textual fluency but on mathematical consistency, implementation correctness, and solution quality. To understand why myopic policy is insufficient under these conditions, we combine a case study with a Pass@$k$ versus CoT-Pass@$k$ comparison. Together, these analyses reveal two compounding limitations.

First, the following case study demonstrates that a locally plausible modeling choice can later invalidate the global formulation. Consider an OR prompt in which the task is to construct a mathematical model and its corresponding Python implementation. Specifically, the problem is to determine the optimal quantities of six available food items (Steak, Tofu, Chicken, Broccoli, Rice, and Spinach) so as to meet given nutritional requirements (protein, carbohydrates, and calories) while minimizing total cost.

\begin{nicebox}{lightblue}{A Toy Example}
\small

\key{Prompt:} Below is an operations research question. Build a mathematical model and corresponding python code using `coptpy' that appropriately addresses the question.

\medskip
\key{Question:} Imagine you are a dietitian and you have been tasked with creating a meal plan for a bodybuilder. You have six food items to choose from: Steak, Tofu, Chicken, Broccoli, Rice, and Spinach. Each food provides certain amounts of protein, carbohydrates, and calories, and each has its own cost. Here's the nutritional value and cost of each food: Steak: It gives you 14 grams of protein, 23 grams of carbohydrates, and 63 calories. Tofu: It offers 2 grams of protein, 13 grams of carbohydrates, and 162 calories. Chicken: It packs a punch with 17 grams of protein, 13 grams of carbohydrates, and gives you 260 calories for. Broccoli: It provides 3 grams of protein, a mere 1 gram of carbohydrates, and 55 calories. Rice: It gives a hearty 15 grams of protein, 23 grams of carbohydrates, and 231 calories. Spinach: It provides 2 grams of protein, 8 grams of carbohydrates, and a huge 297 calories. 

{\textbf{Your goal is to ensure that the bodybuilder gets at least 83 grams of protein, 192 grams of carbohydrates, and 2089 calories from whatever combination of these foods you choose. The challenge is to keep the cost as low as possible while meeting these nutritional targets. What is the minimum cost to meet these nutritional requirements with the available food options?}}

\end{nicebox}

Under low-temperature sampling, a natural and highly probable early step is to define variables as nonnegative food quantities, which is an approach consistent with the standard linear programming formulation of diet problems. This tendency arises because base models have been extensively exposed to linear programming examples, leading them to favor continuous formulations as a default ``reasonable'' template. However, if the food quantities must represent discrete servings or units, the correct feasible set is $\mathbb{Z}_{+}^6$ rather than $\mathbb{R}_{+}^{6}$. Omitting the integrality restriction transforms the problem from an integer optimization model into its continuous relaxation. Although the resulting derivation may remain syntactically coherent and even yield a lower objective value, the solution can contain fractional food items and therefore may be infeasible for the original problem. The following contrast between low-temperature sampling and correct model illustrates this failure mode.

\begin{nicebox}{lightred}{Low-temperature sampling}
\small

\begin{itemize}[leftmargin=1.3em,itemsep=0.25em]
    \item \textbf{Mathematical Model:} To solve the dietitian's optimization problem, we will establish a linear programming model aimed at minimizing the cost of food while meeting the nutritional requirements of the bodybuilder.
    
    \item \textbf{Decision Variables:} ``$x_1, x_2, x_3, x_4, x_5, x_6$: Represent the quantities of Steak, Tofu, Chicken, Broccoli, Rice, and Spinach, respectively.''
    
    \item \textbf{Non-negativity Constraints:} ``$x_1, x_2, x_3, x_4, x_5, x_6 \geq 0$, as the quantities of food cannot be negative.'' \badhl{[integer restriction missing]}
    
    \item \textbf{\ldots}
\end{itemize}

\end{nicebox}

\begin{nicebox}{lightgreen}{Correct model}
\small

\begin{itemize}[leftmargin=1.3em,itemsep=0.25em]
    \item \textbf{Mathematical Model:} To solve this diet optimization problem, we will establish a linear programming model. The objective is to minimize the total cost while meeting the nutritional requirements.
    
    \item \textbf{Decision Variables:} ``$x_1, x_2, x_3, x_4, x_5, x_6$: Represent the quantities of Steak, Tofu, Chicken, Broccoli, Rice, and Spinach, respectively.''
    
    \item \textbf{Non-negativity Constraints:} ``$x_1, x_2, x_3, x_4, x_5, x_6 \geq 0$ \goodhl{and should be integers} to ensure feasible integer solutions.''
    
    \item \textbf{\ldots}
\end{itemize}

\end{nicebox}

From the above case study, a locally plausible token choice at the variable-definition stage can shift the feasible region of the downstream problem, ultimately yielding an invalid optimization model. This suggests that relying solely on immediate likelihood, even when high, can be risky. One might argue that this risk is acceptable if the LLM can robustly recover from local mistakes, i.e., occasional errors that still lead to correct final code and results. In that scenario, low-temperature sampling would remain a practical choice. However, in OR model generation, we observe a different dynamic. Once an erroneous modeling decision is made, subsequent generation steps are conditioned on a flawed mathematical structure, and backtracking seldom occurs. This reveals a deeper issue, which we examine next as our second observation.

Second, OR reasoning is highly path-dependent and admits little room for retrospective recovery. Once the model commits to an erroneous modeling choice, later steps are generated conditional on the resulting mathematical semantics. For example, a wrong variable domain, a missing constraint, or a mismatched objective term is not merely a local textual error, it also changes the optimization problem that subsequent constraints, solver code, and numerical answers are built upon.

To characterize this feature, we compare two notions of success: Pass@$k$ and CoT-Pass@$k$ \citep[see,][]{wen2025reinforcement}. For a given instance, let $s^{(1)},...,s^{(k)}$ denote $k$ independently sampled solution trajectories from the model, where each trajectory contains both an intermediate reasoning process and a final answer. Let $\mathbbm{1}_{\mathrm{ans}}(s)\in\{0,1\}$ indicate whether the final answer produced by trajectory $s$ is correct, and let $\mathbbm{1}_{\mathrm{cot}}(s)\in\{0,1\}$ indicate whether its reasoning process is correct. We define
\begin{align*}
	\mathrm{Pass@}k = \Pr\left(\exists\, i\in\{1,...,k\}, \mathbbm{1}_{\mathrm{ans}}\left(s^{(i)}\right)=1\right),
\end{align*}
and
\begin{align*}
	\mathrm{CoT\mbox{-}Pass@}k = \Pr\left(\exists\, i\in\{1,...,k\}, \mathbbm{1}_{\mathrm{ans}}\left(s^{(i)}\right)\mathbbm{1}_{\mathrm{cot}}\left(s^{(i)}\right)=1\right).
\end{align*}
Thus, Pass@$k$ measures whether at least one sampled trajectory yields the correct final answer, whereas CoT-Pass@$k$ requires at least one sampled trajectory to be correct both in its reasoning and in its final answer. 

By construction, $\mathrm{CoT\mbox{-}Pass@}k \leq \mathrm{Pass@}k$. The gap between them measures the probability that a model obtains the correct final answer without following a correct reasoning trajectory. This gap is informative because it reflects the extent to which errors in intermediate reasoning can be benign or recoverable. In answer-only reasoning tasks (e.g., math and multi choices), such a gap can be non-negligible because a model may arrive at the correct final answer through a lucky shortcut, an implicit correction, or a partially flawed chain of thought that nevertheless produces the right terminal response. In such settings, increasing $k$ can improve the chance of sampling a correct final answer even when some successful trajectories contain imperfect reasoning.

OR tasks are different. The final answer is usually the consequence of an explicit modeling-and-solving pipeline. If the variable definition, feasible region, objective function, or implementation is wrong, the resulting numerical solution is normally wrong or infeasible as well. Therefore, answer correctness is tightly coupled with reasoning correctness. Equivalently, trajectories with $\mathbbm{1}_{\mathrm{ans}}(s)=1$ but $\mathbbm{1}_{\mathrm{cot}}(s)=0$ should be rare in OR generation. In this case, Pass@$k$ and CoT-Pass@$k$ should be close to each other, because the model cannot easily compensate for a wrong intermediate formulation at the final answer stage.

\begin{figure}[t!]
\centering
\subfloat[Pass@$k$ and COT-Pass@$k$ in MAMOEasyLP.\label{fig:passk_cotpassk_mamo}]{
    \includegraphics[width=0.48\linewidth]{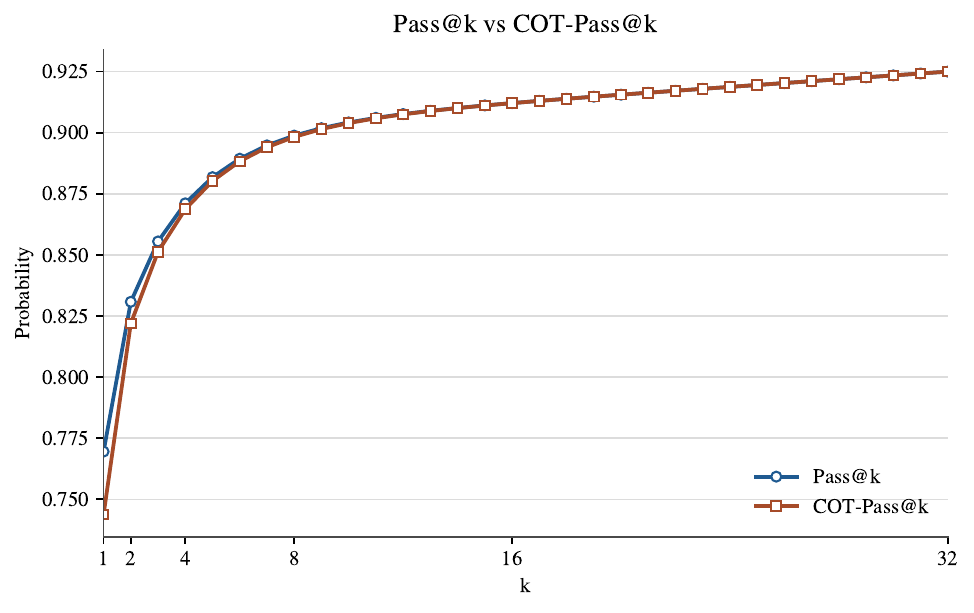}
}
\hfill
\subfloat[Pass@$k$ and COT-Pass@$k$ in AIME2025.\label{fig:passk_cotpassk_aime}]{
    \includegraphics[width=0.48\linewidth]{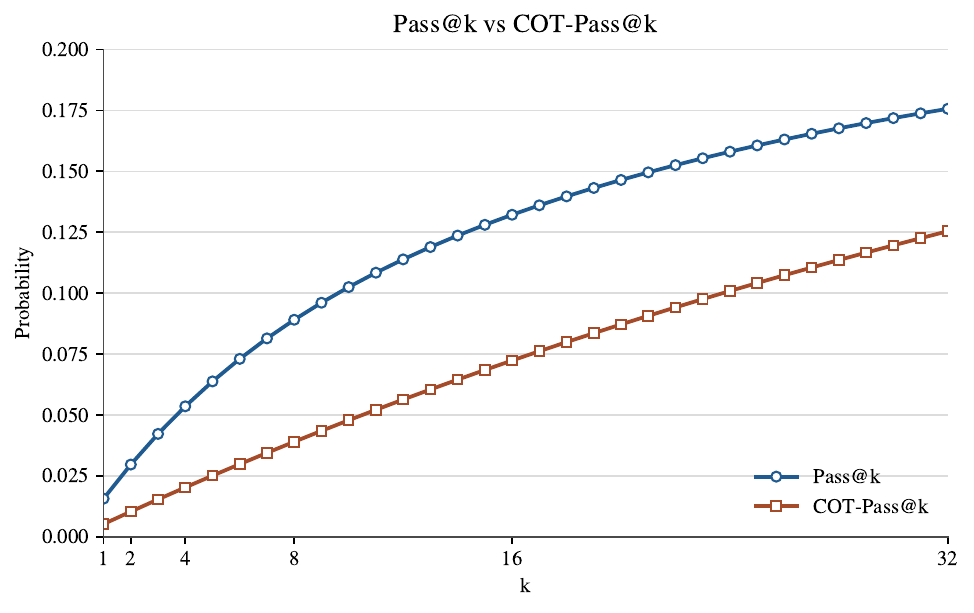}
}
\caption{Pass@$k$ versus COT-Pass@$k$ can behave differently across domains. The key distinction is whether correct final answers can be produced without a correspondingly correct reasoning process.}
\label{fig:passk_cotpassk}
\end{figure}

Figure \ref{fig:passk_cotpassk} supports this distinction. On MAMOEasyLP, Pass@$k$ and CoT-Pass@$k$ nearly coincide across different values of $k$, suggesting that correct final answers almost always arise from correct reasoning trajectories. By contrast, on AIME2025 (a mathematical reasoning benchmark), a visible gap remains between the two curves, indicating that final-answer correctness can sometimes be achieved even when the chain of thought is not fully correct. This comparison highlights a key structural property of OR generation: once the reasoning path deviates from a valid formulation, subsequent generation rarely recovers the trajectory into a correct final solution.

The case study and the Pass@$k$ versus CoT-Pass@$k$ comparison identify why myopic policy is inadequate for OR generation. Specifically, the case study shows that a locally plausible modeling choice can later invalidate the global formulation, while the Pass@$k$ versus CoT-Pass@$k$ comparison shows that, once such a choice is made, the induced error is rarely corrected by subsequent continuation. Thus, LLMs for OR questions face a compounded risk: local likelihood may favor a misleading intermediate commitment, and the effect of that commitment can persist through the variables, constraints, objective, and solver code. Therefore, at each step, reliable method should assess candidate continuations before committing to them, rather than relying only on their immediate probability under the base model.

\subsection{Motivation}
To improve the performance of LLMs, one promising approach is to generate diverse responses and then conduct reinforcement learning based on preferences, e.g., reinforcement learning from human feedback (RLHF) and reinforcement learning with verifiable rewards (RLVR). In paradigms like RLVR, model post-training is formulated as a reward maximization problem. Given a reward function $r(\mathbf{x})$ that evaluates generated answers, the objective is to learn a policy $q(\cdot)$ that maximizes expected reward while remaining close to the base model $p(\cdot)$:
\begin{align}\label{eq:opti_problem}
	\max_{q(\cdot)} \mathbb{E}_{\mathbf{x}\sim q}[r(\mathbf{x})] - \frac{1}{\lambda} D_{KL}[q(\mathbf{x}) || p(\mathbf{x})],
\end{align}
where $\lambda > 0$ and $D_{KL}$ denotes the Kullback-Leibler divergence of two different distributions. Under standard regularization assumptions, the optimal policy given reward function $r(\mathbf{x})$ admits the form
\begin{align}\label{eq:opti_police}
	q(\mathbf{x}) = \frac{1}{Z} p(\mathbf{x})\exp(\lambda r(\mathbf{x})),
\end{align}
where $\lambda$ controls the strength of reward shaping and $Z = \sum_{\mathbf{x}}p(\mathbf{x})\exp(\lambda r(\mathbf{x}))$ is a partition function. 

By defining a reward function that evaluates the final generated answers, the model learns a policy that optimizes global trajectories rather than just local tokens. Specifically, it follows from \eqref{eq:opti_police} that
\begin{align}\label{eq:local_RL}
q(x_t | x_{<t}) &= \frac{q(x_{<t},x_t)}{q(x_{<t})} = \frac{\sum_{x_{>t}}q(x_{<t},x_t,x_{>t})}{\sum_{x_t'}\sum_{x_{>t}} q(x_{<t},x_t',x_{>t})} \propto \sum_{x_{>t}} p(x_{<t},x_t,x_{>t})\exp(\lambda r(x_{<t},x_t,x_{>t}))\nonumber\\
&= p(x_{<t})p(x_t|x_{<t})\sum_{x_{>t}} p(x_{>t} | x_{\leq t})\exp(\lambda r(x_{<t},x_t,x_{>t}))\nonumber\\
&\propto p(x_t|x_{<t})\E_{x_{>t}\sim p(\cdot | x_{\leq t})}[\exp(\lambda r(\mathbf{x}))]\nonumber\\
&= p(x_t|x_{<t}) \phi(x_{\leq t}),
\end{align}
where $\phi(x_{\leq t}) = \E_{x_{>t}\sim p(\cdot | x_{\leq t})}[\exp(\lambda r(\mathbf{x}))]$ denotes a conditional expectation that aggregates the expected quality of future continuations after step $t$. Therefore, under RLVR, the preference over each candidate token is determined not only by its immediate likelihood under the base model but also by the downstream potential. This demonstrates that viewing reasoning as a trajectory-dependent process with lookahead capabilities can significantly improve the performance of the base LLMs.

However, applying reinforcement learning in OR poses significant practical challenges. Such approaches typically require iterative training, substantial computational resources, and carefully designed reward signals, which are often sparse or expensive to obtain in practice. Furthermore, there is an ongoing discussion regarding whether RLVR fundamentally endows models with novel intrinsic capabilities, or if its gains primarily arise from shifting probability toward valid reasoning paths that are already available to the base model. Given this debate, alongside the prohibitive costs of post-training, a natural question arises: can we make better inference-time decisions among competing intermediate continuations without relying on costly parameter updates?

If the fundamental advantage of RLVR lies in assessing the future consequences of local choices, we can bypass the training phase by directly incorporating this lookahead mechanism into the inference process. Instead of implicitly learning a global policy through parameter updates, we can explicitly evaluate each candidate continuation based on the quality of the complete solution that follows from it. To formalize this intuition, we introduce a value function $V$ that represents the probability of eventually arriving at a correct final answer given the current generated prefix and a candidate continuation. The next section develops this inference-time alternative and details how we approximate this ideal value function efficiently during generation.

\section{Method}\label{sec:method}

Before detailing our method, we first clarify where our framework intervenes in the generation process. When deploying LLMs, a critical distinction is whether the probability engine and the control program that assembles the final output are separable. In closed-source commercial application programming interfaces (APIs), these two components are coupled into a black box: the user submits a prompt and receives a complete sequence, with no access to the intermediate probability distributions or the token-by-token selection logic. In an open-source environment, the model exposes two primitive operations: it returns the conditional distribution $p(\cdot | prefix)$ and can draw samples from that distribution, but it does not dictate which sample is selected. The actual generation loop that chooses a specific token at each step and iteratively constructs the full sequence is an external control program that the user can modify (see Figure \ref{fig:framework_pseudocode}). As shown in the figure, this intervention relies on a resampling mechanism, which we will formally define later in this section.

This separation means we do not need to retrain the model or modify its parameters. Instead, we rewrite the external control program to embed a simulation-based lookahead search. Before committing to a candidate continuation, our algorithm uses the model's probability engine to project several steps forward, evaluating the structural stability of the resulting trajectory. It then selects the candidate whose downstream behavior is most concentrated and coherent, rather than the one with the highest immediate probability. By replacing the default myopic selection rule rather than the model weights, we steer generation toward formulations that are more likely to remain mathematically consistent.

\begin{figure*}[t]
\centering
\resizebox{\textwidth}{!}{%
\begin{tikzpicture}[
    font=\small,
    >=Stealth,
    codebox/.style={
        draw=black!60,
        thick,
        rounded corners,
        fill=white,
        drop shadow={opacity=0.12},
        minimum width=7.15cm,
        minimum height=8.3cm,
        align=left
    },
    llmbox/.style={
        draw=orange!80!black,
        thick,
        rounded corners,
        fill=orange!8,
        drop shadow={opacity=0.12},
        minimum width=9.6cm,
        minimum height=3.25cm,
        align=left
    },
    api/.style={
        draw=orange!80!black,
        rounded corners,
        fill=white,
        inner xsep=5pt,
        inner ysep=3pt,
        font=\footnotesize\ttfamily
    },
    arrow/.style={
        ->,
        thick,
        dashed,
        color=blue!65!black
    },
    simarrow/.style={
        ->,
        thick,
        dashed,
        color=orange!85!black
    },
    codeline/.style={
        anchor=west,
        font=\footnotesize\ttfamily
    },
    title/.style={
        anchor=west,
        font=\bfseries\large
    },
    subtitle/.style={
        anchor=west,
        font=\small,
        text=gray
    },
    note/.style={
        anchor=west,
        font=\footnotesize,
        text=black!70,
        text width=5.7cm,
        align=left
    }
]

\node[llmbox] (llm) at (0,0) {};

\node[font=\bfseries\large, anchor=west] at ($(llm.north west)+(3.5,-0.45)$)
    {Base LLM};

\node[api, anchor=west] at ($(llm.north west)+(0.45,-1.15)$)
    {get\_prob(x)};
\node[note] at ($(llm.north west)+(3.35,-1.15)$)
    {return $p=\Pr(\cdot\mid x)$};

\node[api, anchor=west] at ($(llm.north west)+(0.45,-1.80)$)
    {sample(p)};
\node[note] at ($(llm.north west)+(3.35,-1.80)$)
    {draw one token $z\sim p$};

\node[api, anchor=west] at ($(llm.north west)+(0.45,-2.45)$)
    {sample\_N\_tokens(p,N)};
\node[note] at ($(llm.north west)+(4.35,-2.45)$)
    {draw $N$ candidate tokens from $p$};

\node[codebox] (stdbox) at (-4.3,-6.45) {};
\node[codebox] (ourbox) at (4.3,-6.45) {};

\node[title] at ($(stdbox.north west)+(1.35,-0.45)$)
    {Standard Generation};

\node[codeline] at ($(stdbox.north west)+(0.35,-1.45)$)
    {prefix = prompt};
\node[codeline] at ($(stdbox.north west)+(0.35,-1.90)$)
    {\textbf{while} generation:};

\node[codeline] (std-getprob) at ($(stdbox.north west)+(0.65,-2.45)$)
    {p = \textcolor{blue!70!black}{LLM.get\_prob(prefix)}};

\node[codeline, text=gray] at ($(stdbox.north west)+(0.65,-3.05)$)
    {\# Local one-step decision};
\node[codeline] (std-sample) at ($(stdbox.north west)+(0.65,-3.50)$)
    {token = \textcolor{orange!85!black}{LLM.sample(p)}};
\node[codeline] at ($(stdbox.north west)+(0.65,-3.95)$)
    {prefix = prefix + token};

\node[title] at ($(ourbox.north west)+(1.75,-0.45)$)
    {Our Intervention};

\node[codeline] at ($(ourbox.north west)+(0.35,-1.35)$)
    {prefix = prompt};
\node[codeline] at ($(ourbox.north west)+(0.35,-1.80)$)
    {\textbf{while} generation:};

\node[codeline] (our-getprob-main) at ($(ourbox.north west)+(0.65,-2.30)$)
    {p = \textcolor{blue!70!black}{LLM.get\_prob(prefix)}};
\node[codeline] (our-sampleN) at ($(ourbox.north west)+(0.65,-2.75)$)
    {candidates = \textcolor{orange!85!black}{LLM.sample\_N\_tokens(p,N)}};

\node[codeline, text=gray] at ($(ourbox.north west)+(0.65,-3.35)$)
    {\# Estimate reward};
\node[codeline] at ($(ourbox.north west)+(0.65,-3.80)$)
    {\textbf{for} c \textbf{in} candidates:};
\node[codeline] at ($(ourbox.north west)+(0.95,-4.25)$)
    {sim\_prefix = prefix + c};
\node[codeline] at ($(ourbox.north west)+(0.95,-4.70)$)
    {\textbf{for} h = 1,\ldots,H:};

\node[codeline] (our-getprob-sim) at ($(ourbox.north west)+(1.25,-5.15)$)
    {q = \textcolor{blue!70!black}{LLM.get\_prob(sim\_prefix)}};
\node[codeline] (our-sample-sim) at ($(ourbox.north west)+(1.25,-5.60)$)
    {z = \textcolor{orange!85!black}{LLM.sample(q)}};
\node[codeline] at ($(ourbox.north west)+(1.25,-6.05)$)
    {sim\_prefix = sim\_prefix + z};

\node[codeline] at ($(ourbox.north west)+(0.95,-6.55)$)
    {R[c] = reward(sim\_prefix)};
\node[codeline] at ($(ourbox.north west)+(0.95,-7.00)$)
    {w[c] $\propto$ p(c) $\times$ R[c]};

\node[codeline] at ($(ourbox.north west)+(0.65,-7.55)$)
    {best\_c = resample(candidates, w)};
\node[codeline] at ($(ourbox.north west)+(0.65,-8.00)$)
    {prefix = prefix + best\_c};


\draw[arrow]
    ($(llm.west)+(0,-0.25)$)
    -- ++(-4.05,0)
    |- (std-getprob.west);

\draw[simarrow]
    ($(llm.west)+(0,-0.75)$)
    -- ++(-3.25,0)
    |- (std-sample.west);

\draw[arrow]
    ($(llm.east)+(0,-0.20)$)
    -- ++(3.25,0)
    |- (our-getprob-main.east);

\draw[simarrow]
    ($(llm.east)+(0,-0.55)$)
    -- ++(3.85,0)
    |- (our-sampleN.east);

\draw[arrow]
    ($(llm.east)+(0,-0.90)$)
    -- ++(4.45,0)
    |- (our-getprob-sim.east);

\draw[simarrow]
    ($(llm.east)+(0,-1.25)$)
    -- ++(5.05,0)
    |- (our-sample-sim.east);

\end{tikzpicture}%
}
\caption{The base LLM (top) serves strictly as a frozen probability engine. While standard methods (left) blindly follow the default myopic sampling code, our framework (right) intervenes in the decoding script by generating multiple candidates and using short-lookahead simulations to estimate structural stability. This allows us to reweight the output distribution and steer the generation toward mathematically coherent OR formulations.}
\label{fig:framework_pseudocode}
\end{figure*}

\subsection{Reward Design}\label{sec:reward_design}
As motivated in the previous section, stabilizing OR generation requires evaluating the global consequences of local choices. Suppose that at step $t$, the model has already generated a prefix $x_{<t}$, and let $\mathcal{A}_t$ denote the set of candidate tokens or blocks under consideration. Ideally, the model should choose the continuation that maximizes the probability of eventually arriving at a correct final answer. Formally, for any candidate $a \in \mathcal{A}_t$, define
\begin{align*}
    V(x_{<t} \oplus a) = \Pr\left(\text{final answer is correct} \mid x_{<t} \oplus a\right),
\end{align*}
where $x_{<t} \oplus a$ denotes the new prefix formed by appending $a$ to $x_{<t}$. The ideal decision rule is then $a_t^\star \in \arg\max_{a \in \mathcal{A}_t} V(x_{<t} \oplus a)$, and the next output is chosen as $a_t^\star$.

In principle, one can estimate $V(x_{<t} \oplus a)$ by simulation. That is, for each candidate $a$, we temporarily fix $a$ as the continuation at step $t$, sample a sufficiently large number of full trajectories from $x_{<t} \oplus a$, and use the empirical success rate of the final answers as an estimator of $V(x_{<t} \oplus a)$. This corresponds to propagating terminal correctness information back to the current local decision and related ideas have been explored in \cite{zhao2024probabilistic}. However, $V(x_{<t} \oplus a)$ is generally difficult to use during generation. In OR tasks, correctness is typically a terminal property, i.e., whether a candidate is truly valid can usually be determined only after the complete formulation, implementation, and final answer have been generated. As a result, directly estimating $V$ for each intermediate candidate would require repeated full-path evaluation, which is not only computationally expensive but also poorly suited to timely decision-making during inference.

This observation motivates the need for a surrogate reward of $V$. Ideally, such a proxy should satisfy two requirements. First, it should be informative about the true terminal value $V(x_{<t} \oplus a)$, so that improving the proxy also tends to improve final correctness. Second, it should be estimable from partial future information, without requiring the model to complete and verify the entire remaining path for every candidate. In other words, we seek a signal that is correlated with eventual success, yet can already be extracted from partial future information.

Our intuition is that downstream uncertainty provides such a signal, which has also been studied by \cite{farquhar2024detecting}. Importantly, the relevant uncertainty in OR tasks is not one-dimensional. It may appear in several forms, including divergent variable semantics, inconsistent objective formulations, incompatible constraints, conflicting implementation choices, or other structural mismatches in the downstream reasoning process. A candidate that appears locally plausible may still induce substantial branching across these dimensions, indicating that it is not on a stable formulation path. By contrast, a strong candidate is more likely to support a concentrated and coherent set of future continuations. Moreover, these uncertainties often reveal themselves over a relatively short horizon. In OR generation, many structural errors do not remain latent until the final answer. For example, an incorrect variable definition may quickly lead to different objective terms, mismatched constraints, or inconsistent code structure in the subsequent steps. Therefore, it is often unnecessary to simulate the entire remaining path in order to assess candidate quality.

To illustrate this relationship between uncertainty and correctness, we consider a concrete example in Figure~\ref{fig:uncertainty_correctness}. At ``variable definition'' step in an MAMOEasyLP instance, we examine 16 candidate continuations and, for each candidate, simulate 32 full downstream rollouts. We then manually cluster these rollouts and compute the empirical entropy $\widehat{H}(a) = - \sum_{c=1}^{C(a)} \widehat{\pi}_c(a)\log \widehat{\pi}_c(a)$, where $\widehat{\pi}_c(a)$ is the fraction of rollouts assigned to structural cluster $c$, and $C(a)$ is the number of induced clusters. We also estimate the empirical success rate $\widehat{V}(a) = \frac{1}{32}\sum_{m=1}^{32}\mathbbm{1}_{\mathrm{ans}}\bigl(\tau_a^{(m)}\bigr)$, where $\tau_a^{(m)}$ is the $m$-th rollout under candidate $a$. As shown in the figure, lower structural entropy tends to be associated with higher success rate. For example, the candidate with $\widehat{H}=0.68$ achieves a success rate of 87.5\%, whereas the candidate with $\widehat{H}=1.85$ achieves only 9.4\%. We can see that candidates $a_{10}$ and $a_{14}$ appear highly similar at the local level, yet their downstream uncertainty and eventual correctness differ substantially. This does not prove that uncertainty alone determines correctness, but it strongly suggests that downstream uncertainty contains useful information about whether a candidate continuation remains on a valid OR reasoning path.

\begin{figure}[t!]
\centering
\includegraphics[width=\linewidth]{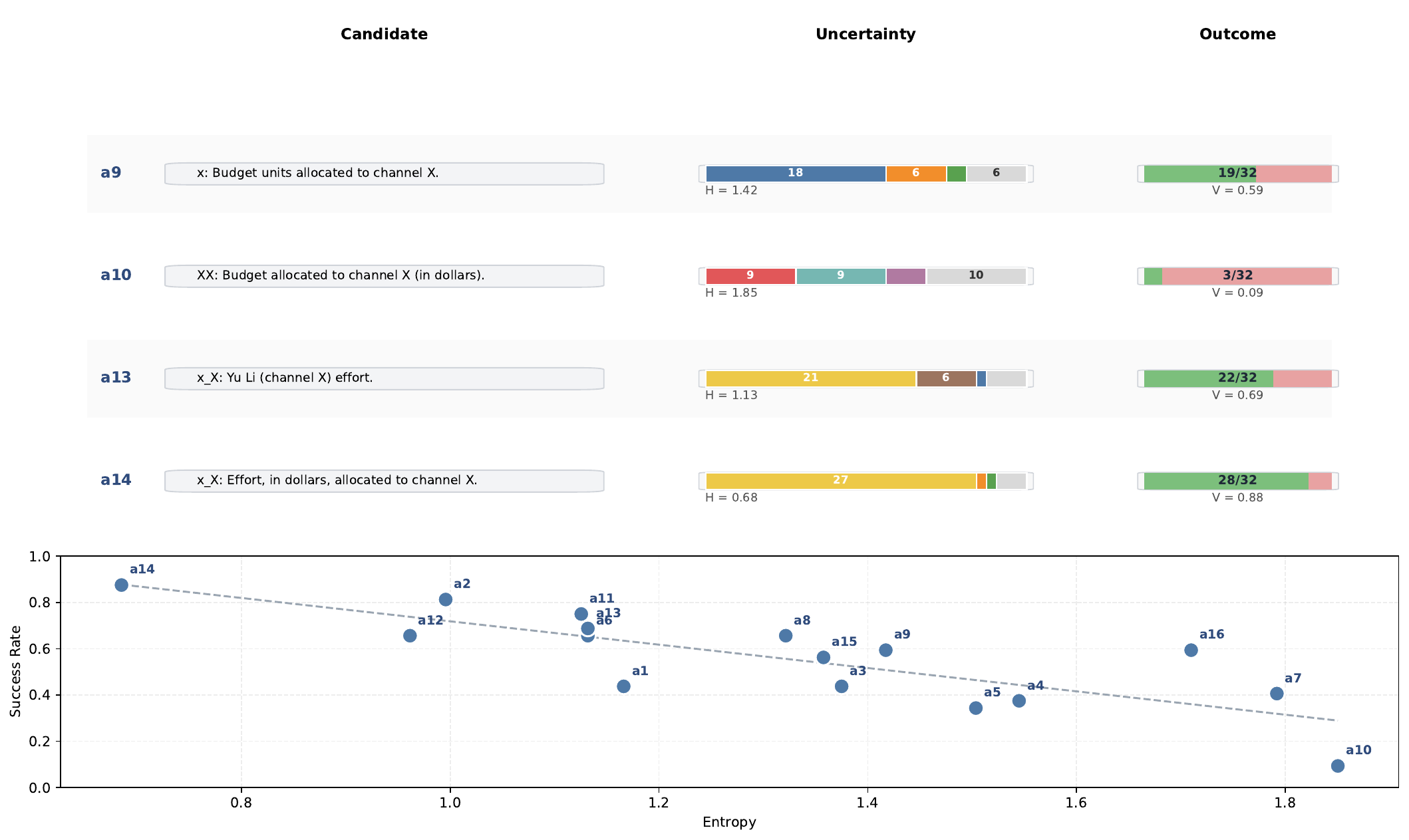}
\caption{A motivating illustration of the relationship between downstream dispersion and empirical correctness. At variable-definition step, we compare 16 candidate continuations and simulate 32 full downstream rollouts for each candidate.}
\label{fig:uncertainty_correctness}
\end{figure}

It is not sufficient to only use the uncertainty because a candidate may have low downstream entropy simply because the model is confidently following a wrong but internally consistent path. Hence, we also need to retain the local preference of the base model, i.e., whether the current token or block appears natural under $p(\cdot)$. This leads to reward functions that combine \emph{local plausibility} and \emph{future stability}. For simplicity, we present the method at token-level notation in the following. That is, at step $t$ we consider selecting the next token $x_t$. It is worth noting that in practical OR generation, it is often more meaningful to apply the same idea at the block level, since a single token rarely reveals whether a formulation choice is sound, whereas a complete variable declaration, a full constraint, or an objective term often does. All definitions below can be extended immediately if one replaces $x_t$ by a semantically meaningful block.

Let $H_t = - \mathbb{E}_{x_{>t}\sim p(\cdot | x_{\leq t})}\left[\log p(x_{>t} | x_{\leq t})\right]$ denote the conditional entropy of future continuations given the current prefix $x_{\le t}$. A large value of $H_t$ indicates that the current choice leads to many competing future trajectories, while a small value indicates that the downstream continuation is more concentrated and therefore more stable. Next, we define the entropy reward which is similar to $\phi$ in \eqref{eq:local_RL}
\begin{align}\label{eq:entropy_reward}
	\phi_{\mathrm{entropy},\alpha,\beta}(x_{\leq t}) = \exp\left(\alpha \log p(x_t | x_{<t}) - \beta H_t\right),
\end{align}
where $\alpha \geq 0$ and $\beta \geq 0$ are tuning parameters. The first term on the right hand side (RHS) of \eqref{eq:entropy_reward} encourages locally plausible continuations and reduces the chance of selecting unnatural tokens. The second term on the RHS of \eqref{eq:entropy_reward} penalizes prefixes whose future paths are highly dispersed. Thus, $\phi_{\mathrm{entropy},\alpha,\beta}$ favors continuations that are both natural under the base model and structurally stable under future rollout.
This reward directly matches the intuition developed above. In OR tasks, the ideal continuation is not merely one that looks locally likely, but one that keeps the future formulation path concentrated around coherent variable definitions, compatible constraints, or a stable objective structure. The entropy term is designed to capture exactly this notion of downstream stability.

Besides explicit entropy penalization, it is also useful to consider a more likelihood-oriented reward that favors prefixes whose future probability mass is concentrated on a small number of strong continuations. This leads to the power reward
\begin{align}\label{eq:power_reward}
	\phi_{\mathrm{power},\alpha}(x_{\leq t}) = \exp\left(\alpha \log p(x_t | x_{<t}) + \log \mathbb{E}_{x_{>t}\sim p(\cdot | x_{\leq t})} \left[p(x_{>t} | x_{\leq t})^{\alpha}\right]\right),
\end{align}
where $\alpha \geq 0$. The interpretation of $\phi_{\mathrm{power},\alpha}$ is slightly different from that of the entropy reward. Instead of explicitly subtracting uncertainty, it rewards prefixes whose lookahead distribution assigns substantial mass to a few high-probability future trajectories. In this sense, it is a softer and more likelihood-driven proxy for continuation quality. When the downstream probability mass is diffuse, the power moment is relatively small, while when the downstream mass is concentrated on a few strong continuations, the power moment becomes large. Hence, $\phi_{\mathrm{power},\alpha}$ also prefers stable prefixes, but it does so through high-probability concentration rather than explicit entropy regularization.

The relationship between the two rewards can be clarified by Jensen's inequality. Since the logarithm is concave, we have
\begin{align*}
	\log \mathbb{E}_{x_{>t}\sim p(\cdot | x_{\leq t})} \left[p(x_{>t} | x_{\leq t})^{\alpha}\right] \geq \mathbb{E}_{x_{>t}\sim p(\cdot | x_{\leq t})} \left[\log p(x_{>t} | x_{\leq t})^{\alpha}\right] = -\alpha H_t.
\end{align*}
Therefore, the power reward can be viewed as a more aggressive high-mass surrogate, whereas the entropy reward explicitly penalizes dispersion. The former emphasizes strong future likelihood concentration, while the latter more directly targets future uncertainty.

Importantly, our goal here is not to claim that one of these two rewards is universally superior. Rather, they represent two principled ways of turning rollout-based downstream information into an inference-time proxy for trajectory quality. The entropy reward is closer to the structural-stability intuition, while the power reward yields a cleaner likelihood-based transformation that will be especially convenient in the next section. Both rewards above are still defined through expectations over future continuations, and exact evaluation remains intractable. The next section therefore develops an efficient approximation scheme based on limited lookahead and importance resampling, together with the corresponding theoretical justification.

\subsection{Importance Resampling}

In the following, we introduce how to estimate rewards and how to perform importance resampling. Note that although $\phi_{entropy, \alpha, \beta}$ and $\phi_{power, \alpha}$ in Section \ref{sec:reward_design} formally rely on the expectation over future continuations, it differs fundamentally from the objective in RLVR. Specifically, RLVR relies on terminal, binary correctness (e.g., whether the final answer is TRUE or FALSE) that can only be determined after completing the entire reasoning pipeline. In contrast, $\phi_{entropy, \alpha, \beta}$ and $\phi_{power, \alpha}$ are continuous metrics designed to capture downstream uncertainty and prefix stability. In OR and strongly path-dependent tasks, this structural uncertainty typically manifests within a relatively short continuation horizon. For example, an incorrect variable definition will quickly reveal itself through inconsistent objective terms or mismatched constraints shortly after it is generated, without needing to run the final solver (see Appendix \ref{app:examples}). Moreover, distant future tokens will also be evaluated in subsequent lookahead windows as the generation advances. Therefore, we can only consider the next $H$ tokens to balance estimation accuracy and computational efficiency.

Given a truncated reward function $\phi_H(x_{\leq t})$ which may be either the entropy reward or the power reward, we now describe how to incorporate this signal into inference. Since directly enumerating all possible continuations is infeasible, we adopt an importance resampling strategy at each generation step. Fix a step $t < T$ and an already-generated prefix $x_{<t}$. We define the local truncated selection rule
\begin{align}\label{eq:truncated_reward}
q_H(a | x_{<t}) = \frac{p(a | x_{<t})\phi_H(x_{<t}\oplus a)}{\sum_{a' \in \mathcal{A}_t} p(a' | x_{<t})\phi_H(x_{<t}\oplus a')}, \quad a \in \mathcal{A}_t, \quad H < T - t.
\end{align}
This distribution assigns larger probability mass to candidates with higher reward values, so candidates that are both locally plausible under the base model and more promising under the chosen lookahead-based reward are more likely to be selected. In particular, when $\phi_H$ is chosen as the power reward, the resulting resampled distribution can be viewed as an approximation to a power version of the original distribution, which is similar to the idea in \cite{karan2025reasoning}. Equation \eqref{eq:truncated_reward} is local in the sense that it reweights only the current-step decision, rather than defining a separate global target distribution over complete trajectories. 

To apply the selection rule at step $t$, we draw $N$ candidate tokens independently with replacement from $p(\cdot | x_{<t})$, denoted by $x_t^1, ..., x_t^N$, and write $x_{\leq t}^i = (x_{<t}, x_t^i)$ for the resulting candidate prefixes. For each $x_{\leq t}^i$, we generate $M$ rollouts to estimate $\phi_H(x_{\leq t}^i)$, denoted by $\phi_{M,H}(x_{\leq t}^i)$. The sampling procedure is shown in Figure \ref{fig:sampling}. We then assign each candidate a nonnegative score $\widetilde{w}_i = \phi_{M,H}(x_{\leq t}^i)$ and normalize these scores to obtain $w_i = \widetilde{w}_i/\sum_{k=1}^N \widetilde{w}_k$. Finally, we sample an index $I$ according to the categorical distribution $\{w_i\}_{i=1}^N$ and set $x_{t,N,M} = x_t^I$. This procedure preserves the support of the base model while shifting probability mass toward candidates that receive larger reward values, thereby favoring continuations that are more promising under the chosen lookahead criterion. In the following, we provide Theorem \ref{thm:prefix_ir_strong_full} to show that as the number of sampled candidates and rollouts increases, the resampled token converges in distribution to the target rule $q_H(\cdot|x_{<t})$. Thus, the practical procedure can be viewed as a Monte Carlo approximation to the ideal local reweighted selector. The proof of Theorem \ref{thm:prefix_ir_strong_full} is provided in Appendix \ref{app:proof}.

\begin{figure}[t!]
\begin{center}
\begin{tikzpicture}

\draw [->] (0,0) -- (5,1);
\draw [->] (0,0) -- (5,0);
\draw [->] (0,0) -- (5,-1);

\node [right] at (5,1) {$x_{1:t}^1$};
\node [right] at (5,0) {$x_{1:t}^i$};
\node [right] at (5,-1) {$x_{1:t}^N$};

\draw[fill] (5.3,0.7) circle [radius=0.012];
\draw[fill] (5.3,0.5) circle [radius=0.012];
\draw[fill] (5.3,0.3) circle [radius=0.012];

\draw[fill] (5.3,-0.7) circle [radius=0.012];
\draw[fill] (5.3,-0.5) circle [radius=0.012];
\draw[fill] (5.3,-0.3) circle [radius=0.012];

\draw [->] (5.7,0) -- (9,0.5);
\draw [->] (5.7,0) -- (9,0);
\draw [->] (5.7,0) -- (9,-0.5);

\node [right] at (9,0.5) {$x_{t+1:t+H}^{1}$};
\node [right] at (9,-0.5) {$x_{t+1:t+H}^{M}$};

\draw[fill] (9.3,0.2) circle [radius=0.012];
\draw[fill] (9.3,0) circle [radius=0.012];
\draw[fill] (9.3,-0.2) circle [radius=0.012];

\draw [-] (0,-1.5) -- (9.5,-1.5);
\draw [-] (0,-1.5) -- (0,-1.4);
\draw [-] (5.3,-1.5) -- (5.3,-1.4);
\draw [-] (9.5,-1.5) -- (9.5,-1.4);

\node [below] at (2.5,-1.5) {Using LLM $p(\cdot)$};
\node [below] at (7.4,-1.5) {Using LLM $p(\cdot)$};

\end{tikzpicture}
\end{center}
\caption{Illustration of the sampling procedure at time step $t<T$.
The prefix $x_{1:t-1}$ is fixed, and multiple candidate tokens at position $t$
are evaluated using lookahead rollouts.}
\label{fig:sampling}
\end{figure}

\begin{theorem}\label{thm:prefix_ir_strong_full}
Fix a step $t$, a prefix $x_{<t}$, and a lookahead horizon $H<T-t$. Assume that $\mathcal{A}_t$ is finite and that $p(a | x_{<t}) > 0$ for all $a\in\mathcal A_t$. Then, as $N,M \to \infty$, we have $x_{t,N,M} \Rightarrow q_H(\cdot | x_{<t})$, where $\Rightarrow$ means convergence in distribution.
\end{theorem}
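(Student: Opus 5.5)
The plan is to compute the law of $x_{t,N,M}$ exactly, conditional on the candidates and rollouts, and pass to the limit in two stages. Since $\mathcal A_t$ is finite, convergence in distribution is the same as pointwise convergence of the probability mass function. So it suffices to show that for each $a\in\mathcal A_t$,
\[
\Pr(x_{t,N,M}=a)\to q_H(a\mid x_{<t}).
\]
Conditionally on $x_t^1,\dots,x_t^N$ and the rollouts, the selection rule gives
\[
\Pr(x_{t,N,M}=a\mid\cdot)=\frac{\frac1N\sum_{i=1}^N \1\{x_t^i=a\}\,\phi_{M,H}(x_{<t}\oplus x_t^i)}{\frac1N\sum_{i=1}^N\phi_{M,H}(x_{<t}\oplus x_t^i)}.
\]
Taking expectations, the task reduces to showing that this ratio converges in probability to $q_H(a\mid x_{<t})$. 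Bounded convergence then finishes the argument, because the ratio lies in $[0,1]$.

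First, fix the estimator consistency in $M$. For each of the finitely many $a\in\mathcal A_t$, $\phi_{M,H}(x_{<t}\oplus a)\to\phi_H(x_{<t}\oplus a)$ almost surely as $M\to\infty$.

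\begin{itemize}
\item \emph{Power reward.} $\phi_{M,H}$ is $p(a\mid x_{<t})^\alpha$ times an empirical mean of the bounded variables $p(x_{t+1:t+H}\mid x_{\le t})^\alpha\in[0,1]$, so the strong law of large numbers applies.
\item \emph{Entropy reward.} The empirical mean of $-\log p(x_{t+1:t+H}\mid x_{\le t})$ has finite mean, since the horizon is finite and the vocabulary $\mathcal X$ is finite, so the truncated entropy is finite. The strong law gives convergence, and the continuous map $\exp(\alpha\log p-\beta\,\cdot)$ transfers it to $\phi_{M,H}$.
\end{itemize}

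The rollouts attached to different candidates may be coupled arbitrarily. To avoid this, I will note that convergence is only needed for the finitely many values $a$, uniformly: $\max_{a\in\mathcal A_t}|\phi_{M,H}(x_{<t}\oplus a)-\phi_H(x_{<t}\oplus a)|\to0$ in probability. I will assume rollouts for each candidate index are drawn i.i.d. given the candidate, so the law of each $\phi_{M,H}(x^i_{\le t})$ depends only on $x_t^i$. The error term for candidate $i$ is then distributed as the error for its value $a$, and a union bound over $\mathcal A_t$ controls $\max_i$ by $\max_a$ in probability. This step needs care.

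Second, split each empirical average into the ideal weight plus an error:
\[
\frac1N\sum_i \1\{x_t^i=a\}\,\phi_{M,H}(x^i_{\le t})=\frac1N\sum_i \1\{x_t^i=a\}\,\phi_H(x_{<t}\oplus a)+R_{N,M}.
\]
The main term equals $\hat p_N(a)\,\phi_H(x_{<t}\oplus a)$, where $\hat p_N$ is the empirical frequency of the candidates. By the law of large numbers (finite support), $\hat p_N(a)\to p(a\mid x_{<t})$. The remainder satisfies
\[
|R_{N,M}|\le\frac1N\sum_i|\phi_{M,H}-\phi_H|(x^i_{\le t}).
\]
Its expectation equals $\sum_a \hat p$-weighted errors, bounded by $\sum_a \E|\phi_{M,H}-\phi_H|(x_{<t}\oplus a)$, which tends to $0$ as $M\to\infty$ uniformly in $N$.

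\begin{itemize}
\item For the power reward, uniform integrability is automatic since everything lies in $[0,1]$.
\item For the entropy reward, I need $\E\,\phi_{M,H}$ to converge. I would try Jensen: $\exp(-\beta\bar L_M)$ with $\bar L_M\ge0$ lies in $[0,1]$, so everything is bounded again.
\end{itemize}

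Finally, the denominator converges to $\sum_{a'}p(a'\mid x_{<t})\phi_H(x_{<t}\oplus a')$. This limit is strictly positive because $p(a'\mid x_{<t})>0$ and $\phi_H>0$, as both rewards are exponentials. The continuous mapping theorem for ratios then gives convergence in probability to $q_H(a\mid x_{<t})$ jointly as $N,M\to\infty$. Here "jointly" is handled by the bounds above, since the remainder bound is uniform in $N$ and the frequency bound is free of $M$. I expect the main obstacle to be the joint limit together with the dependence of the rollout estimators on random candidates; the uniform-in-$N$ $L^1$ bound on $R_{N,M}$ is the device I would rely on to resolve it.
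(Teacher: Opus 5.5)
Your argument is correct, and it takes a different route from the paper's.

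\textbf{The paper's route.} The paper casts the selection step as self-normalized importance sampling with proposal $p_t$ and an intermediate random target $q_{t,M,H}(a)\propto p_t(a)\phi_{M,H}(x_{<t}\oplus a)$. It conditions on $\phi_{M,H}$, treated as a single random function on $\mathcal A_t$ that is independent of the candidate draws. It then restricts to the event $E_M$ on which all estimates lie in $[c,C]$, so the importance weights are bounded by $C/c$. Hoeffding's inequality then gives $R_{N,M}(A)-q_{t,M,H}(A)\to 0$ at an exponential rate in $N$. Finally it uses $q_{t,M,H}\to q_{t,H}$, taking consistency of $\phi_{M,H}$ as given, together with the triangle inequality.

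\textbf{Your route.} You compare directly with the ideal weights. Replacing each $\phi_{M,H}(x^i_{\le t})$ by $\phi_H(x_{<t}\oplus x_t^i)$ leaves main terms that depend only on the empirical frequencies $\hat p_N$. You then bound the remainder in $L^1$, uniformly in $N$, by $\sum_a \E|\phi_{M,H}-\phi_H|(x_{<t}\oplus a)$.

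\textbf{What each buys.} Your bound uses only the conditional law of candidate $i$'s own estimator given $x_t^i$. It therefore covers Algorithm~\ref{alg:blockwise_ir} as written, with separate rollouts for every candidate index and arbitrary coupling across indices. The paper's conditioning on one shared $\phi_{M,H}$ idealizes this structure away. You also need no lower bound $c$ or good event, because the strictly positive limit of the denominator does that job. And you actually verify consistency of $\phi_{M,H}$ for both rewards. The price is that you need $L^1$ rather than in-probability consistency, and you get no explicit rate. You obtain the $L^1$ consistency correctly from boundedness of both estimators in $[0,1]$. For the entropy reward, though, that boundedness is just $\bar L_M\ge 0$, not Jensen.

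\textbf{Delete one claim.} You assert that a union bound over $\mathcal A_t$ controls $\max_{i\le N}|\phi_{M,H}-\phi_H|(x^i_{\le t})$ by $\max_a$. This is false under independent per-candidate rollouts. For fixed $M$, the maximum of $N$ independent estimation errors does not shrink as $N\to\infty$. Controlling it would need a union bound over $i\le N$ and a rate condition like $\log N=o(M)$. Your final argument never uses this maximum: the averaged $L^1$ bound on $R_{N,M}$, and the analogous remainder in the denominator, suffice. So the proof stands without it.
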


The theorem provides the basic justification for our inference procedure. Although exact evaluation of the local reweighted rule is intractable, the proposed importance resampling scheme asymptotically recovers the same decision distribution. In this sense, the algorithm should be understood not as an ad hoc generation heuristic, but as a statistically consistent approximation to the ideal reward-guided local selector.

Note that Theorem \ref{thm:prefix_ir_strong_full} is stated at the token level. However, in practice, the quality of a single candidate token is often difficult to assess, since many meaningful modeling decisions are only expressed after several consecutive tokens. We therefore implement the procedure at the block level. Specifically, at each decision point, we sample $N$ candidate blocks of length $B$ from the base model, and for each candidate block we further simulate $M$ rollout continuations of length $H$ in order to construct a Monte Carlo estimate of the truncated reward. This blockwise implementation is more consistent with the way optimization formulations are actually generated, where semantically meaningful units often appear as variable declarations, constraint clauses, or code fragments rather than isolated tokens.

Another advantage of the blockwise implementation is its lower computational overhead. Suppose the final generated sequence has length $T$, and for simplicity assume that $B$ divides $T$. Then the number of blockwise resampling stages is $T/B$. At each stage, generating $N$ candidate blocks costs $NB$ tokens, while the lookahead simulation costs $NMH$ tokens. Hence, the total token consumption is
\begin{align*}
	\E[{\mbox{token}}] = \frac{T}{B}(NB + NMH) = NT + \frac{NMH}{B}T.
\end{align*}
When $B = 1$, $\E[{\mbox{tokens}}] = N(MH+1)T$, while when $B = H$, this reduces to $NT + NMT = (1+M)NT$. Therefore, block-level lookahead provides a more meaningful evaluation unit while keeping the additional inference cost manageable. The procedure is presented in Algorithm \ref{alg:blockwise_ir}.

\begin{algorithm2e}[t]
\caption{Importance Resampling Generation}
\label{alg:blockwise_ir}
\DontPrintSemicolon
\KwIn{Base model $p(\cdot)$, initial prompt $x_{<1}$, block length $B$, rollout horizon $H$, numbers of candidates and rollouts $(N,M)$, total generation length $T$, truncated reward estimator $\phi_{M,H}(\cdot)$}
\KwOut{Generated sequence $x_{1:T}$}
$t \gets 1$\;
\While{$t \leq T$}{
    \For{$i=1,...,N$}{
        Sample a candidate block $b_i \sim p(\cdot | x_{<t})$ of length $B$\;
        \For{$m=1,\dots,M$}{
            Sample a rollout $r_{i,m} \sim p(\cdot | x_{<t} \oplus b_i)$ of length $H$\;
        }
        Compute $\phi_{M,H}(x_{<t} \oplus b_i)$ using $\{r_{i,m}\}_{m=1}^M$\;
        $\tilde{w}_i \gets \phi_{M,H}(x_{<t} \oplus b_i)$\;
    }
    $w_i \gets \tilde{w}_i / \sum_{k=1}^N \tilde{w}_k$, for $i=1,...,N$\;
    Sample $I \sim \mathrm{Categorical}(w_1,...,w_N)$\;
    $x_{<t+B} \gets x_{<t} \oplus b_I$\;
    $t \gets t+B$\;
    \If{$b_I$ contains the EOS token}{
        \textbf{break}\;
    }
}
\textbf{return} $x_{<t}$\;
\end{algorithm2e}

\section{Numerical Experiments}\label{sec:numerical}
We evaluate inference-time sampling strategies for improving reasoning performance of a fixed base LLM. Following standard practice in prior work, we report pass@1 accuracy as the primary evaluation metric. We conduct experiments on multiple operations research benchmarks:

\begin{itemize}
    \item \textbf{NL4OPT}: a natural-language-to-optimization benchmark focusing on linear programming formulation.

   \item \textbf{MAMO}: a benchmark for mathematical optimization modeling, including both EasyLP and ComplexLP subsets.

   \item \textbf{IndustryOR}: an industry-oriented operations research benchmark consisting of domain-specific optimization and decision-making problems.
\end{itemize}

We evaluate our methods using \textbf{ORLM-LLaMA-3-8B}, a model explicitly tailored for optimization modeling and decision-making tasks. Evaluations are conducted in an instruction-following generation framework, and based on $(N,M,B,H) = (10,4,8,8)$ in Table \ref{tab:orlm_results} and $(N,M,B,H) = (10,4,48,48)$ in Figure \ref{fig:sensitivity_analysis}. All experiments are executed on a single 48GB NVIDIA virtual GPU (vGPU). We benchmark against the following inference-time sampling strategies:

\begin{itemize}
  \item \textbf{Standard sampling:} Autoregressive sampling from the learned model distribution $p(\cdot)$ with a temperature of $\tau = 1$.

  \item \textbf{Low-temperature sampling:} Temperature scaled to $\tau = 0.25$, serving as a robust, training-free baseline for distribution sharpening and stochasticity reduction.

  \item \textbf{Power reward sampling:} A lookahead-driven strategy guided by the reward function $\phi_{power,\alpha}$. At each generation step, we sample $N$ blocks of length $B$. For each block, we execute $M$ stochastic lookahead rollouts over a horizon $H$, resampling based on the estimated lookahead weights.

  \item \textbf{Entropy reward sampling:} A variant of the lookahead strategy incorporating the reward function $\phi_{entropy,\alpha,\beta}$. The algorithmic procedure parallels the power reward but inherently regularizes the generation trajectory via entropy minimization.
\end{itemize}

Table~\ref{tab:orlm_results} details the empirical performance of ORLM-LLaMA-3-8B across various inference configurations on multiple OR benchmarks. We evaluate two primary metrics: success rate, indicating the generation of syntactically valid and executable programs, and correctness rate, verifying whether the derived numerical solution perfectly aligns with the ground truth.
Standard autoregressive sampling ($\tau = 1.0$) yields suboptimal correctness, notably degrading on highly constrained benchmarks such as MAMOComplexLP (22.7\%) and IndustryOR (24.0\%). Unlike open-ended text generation, OR modeling presents a highly unforgiving output space where a single aberrant token can render an entire mathematical formulation structurally invalid or logically infeasible. The inherent stochasticity of standard sampling exacerbates this fragility. While dampening the temperature ($\tau = 0.25$) stabilizes token-level emission and consistently elevates both metrics across all benchmarks, it remains inherently myopic, lacking the capacity for long-horizon logical planning.

As demonstrated in Table 1, lookahead-augmented sampling strategies fundamentally overcome the limitations of myopic policy, establishing superior performance across complex OR datasets. In rigorous OR problems, the generated trajectory is highly susceptible to cascading structural failures. Lookahead mechanisms explicitly mitigate this by proactively anticipating downstream states, thereby pruning erroneous trajectories and steering the model away from structural dead-ends before irreversible token commitments are made. Consequently, power reward sampling achieves absolute correctness improvements of 17.6\% and 9.0\% over standard sampling on MAMOComplexLP and IndustryOR, respectively. Furthermore, entropy reward sampling enhances this trajectory robustness by jointly optimizing for likelihood and future uncertainty. This regularization enables the highest overall correctness on IndustryOR (37.0\%). Ultimately, these findings underscore that uncertainty-aware lookahead signals are indispensable for navigating the rigid constraints of mathematical optimization and ensuring robust autoregressive generation.

\begin{table}[htbp!]
\centering
\caption{Performance of ORLM-LLaMA-3-8B under different inference strategies.}
\label{tab:orlm_results}
\renewcommand{\arraystretch}{2}

\resizebox{\textwidth}{!}{
\begin{tabular}{l l c c c c c c c c c}
\toprule
\multirow{2}{*}{\textbf{Model}} & \multirow{2}{*}{\textbf{Method}} & \multirow{2}{*}{\textbf{Params}} &
\multicolumn{2}{c}{\textbf{NL4OPT}} & \multicolumn{2}{c}{\textbf{MAMOEasyLP}} & \multicolumn{2}{c}{\textbf{MAMOComplexLP}} & \multicolumn{2}{c}{\textbf{IndustryOR}} \\
\cmidrule(lr){4-5} \cmidrule(lr){6-7} \cmidrule(lr){8-9} \cmidrule(lr){10-11}
& & & \textbf{Succ.} & \textbf{Corr.} & \textbf{Succ.} & \textbf{Corr.} & \textbf{Succ.} & \textbf{Corr.} & \textbf{Succ.} & \textbf{Corr.} \\
\midrule
Without lookahead & Standard sampling & $T=1.0$ & 93.9\% & 70.2\% & 97.4\% & 75.3\% & 72.0\% & 22.7\% & 71.0\% & 24.0\% \\
& Low-temperature sampling & $T=0.25$ & 98.0\% & 76.3\% & 99.5\% & 79.0\% & 82.9\% & 36.5\% & 83.0\% & 31.0\% \\
\midrule
With lookahead & Power reward sampling & $\alpha=3$ & 98.4\% & 76.7\% & \textbf{100\%} & \textbf{80.2\%} & \textbf{83.9\%} & \textbf{40.3\%} & 81.0\% & 33.0\% \\
& Entropy reward sampling & $(\alpha,\beta)=(3,0.2)$ & \textbf{98.8\%} & \textbf{78.8\%} & 99.7\% & 79.1\% & 81.0\% & 37.4\% & \textbf{83.0\%} & \textbf{37.0\%} \\
\bottomrule
\end{tabular}
}
\end{table}

Figure \ref{fig:sensitivity_analysis} reports the parameter sensitivity of the two proposed reward-based sampling schemes. Panel \ref{fig:power_alpha} shows the performance of the power reward sampling method. As $\alpha$ increases from 0 to 3, the accuracy improves substantially, indicating that the power reward helps guide the model toward higher-quality trajectories. For larger values, such as $\alpha=7$ and $\alpha=99$, the performance becomes stable with only minor fluctuations. This pattern suggests that the power-reward mechanism is fairly robust once $\alpha$ is chosen beyond a small range.

Panels \ref{fig:entropy_alpha} and \ref{fig:entropy_beta} present the entropy reward sampling method under different combinations of $\alpha$ and $\beta$. As shown in Panel \ref{fig:entropy_alpha}, where $\beta$ is fixed, the effect of $\alpha$ is nontrivial. Increasing $\alpha$ from a very small value often improves performance, suggesting that stronger reward shaping can be beneficial. However, the gain becomes limited once $\alpha$ reaches a moderate range. This suggests that $\alpha$ mainly controls the strength of the reward signal, while its effectiveness depends on its interaction with the entropy coefficient $\beta$.

Panel \ref{fig:entropy_beta} examines the effect of $\beta$ for fixed values of $\alpha$. An important observation is that the case $\beta=0$ is generally not optimal, which indicates that the entropy reward provides useful guidance in the sampling process. In this sense, the entropy term is not merely an auxiliary regularizer, but an effective part of the reward design. At the same time, excessively large values of $\beta$ may lead to performance degradation. One possible reason is that a strong entropy penalty makes the algorithm overly restrictive toward variations along subsequent trajectories. In sequential reasoning or search problems, different trajectories may look different at the surface level while still being semantically consistent. Penalizing all such variations in the same way may therefore suppress valid and informative diversity, which in turn hurts performance. This observation suggests that the main limitation lies not in the use of entropy itself, but in the level at which entropy is measured. A natural direction for future research is therefore to move from surface-level entropy to a semantic-level entropy measure \citep[see, e.g.,][]{farquhar2024detecting} that can distinguish harmful instability from semantically consistent variation.

\begin{figure}[t!]
  \centering
  \subfloat[Power reward sampling\label{fig:power_alpha}]{
    \includegraphics[width=0.50\linewidth]{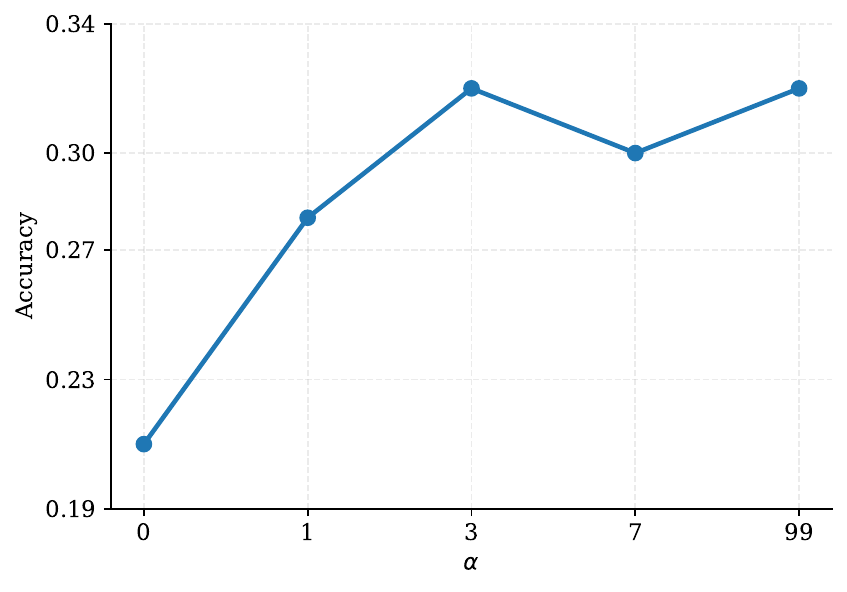}
  }
  \hfill
  \subfloat[Entropy reward sampling\label{fig:entropy_alpha}]{
    \includegraphics[width=0.48\linewidth]{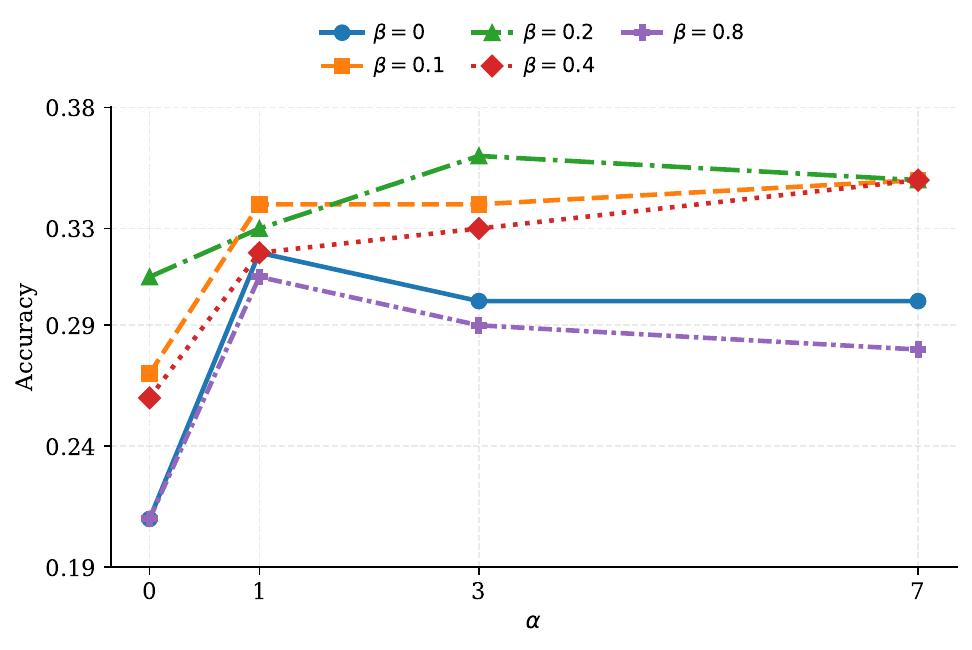}
  }
  \hfill
  \subfloat[Entropy reward sampling\label{fig:entropy_beta}]{
    \includegraphics[width=0.48\linewidth]{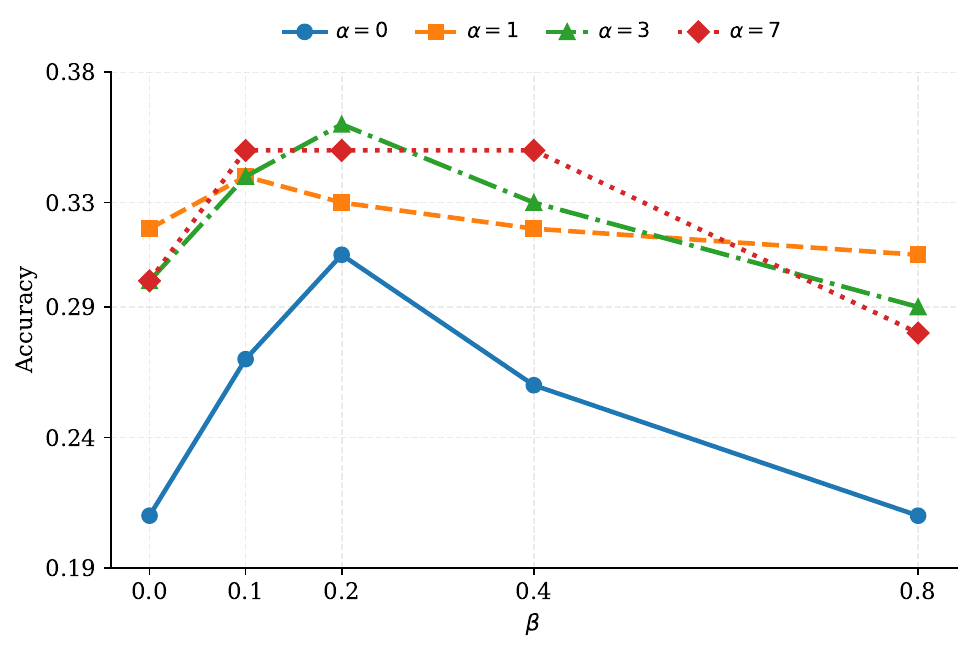}
  }
  \caption{Sensitivity analysis of reward scaling hyperparameters $\alpha$ and $\beta$ across different lookahead sampling strategies on IndustryOR.}
  \label{fig:sensitivity_analysis}
\end{figure}

\begin{figure}[htbp!]
  \centering
  \includegraphics[width=\linewidth]{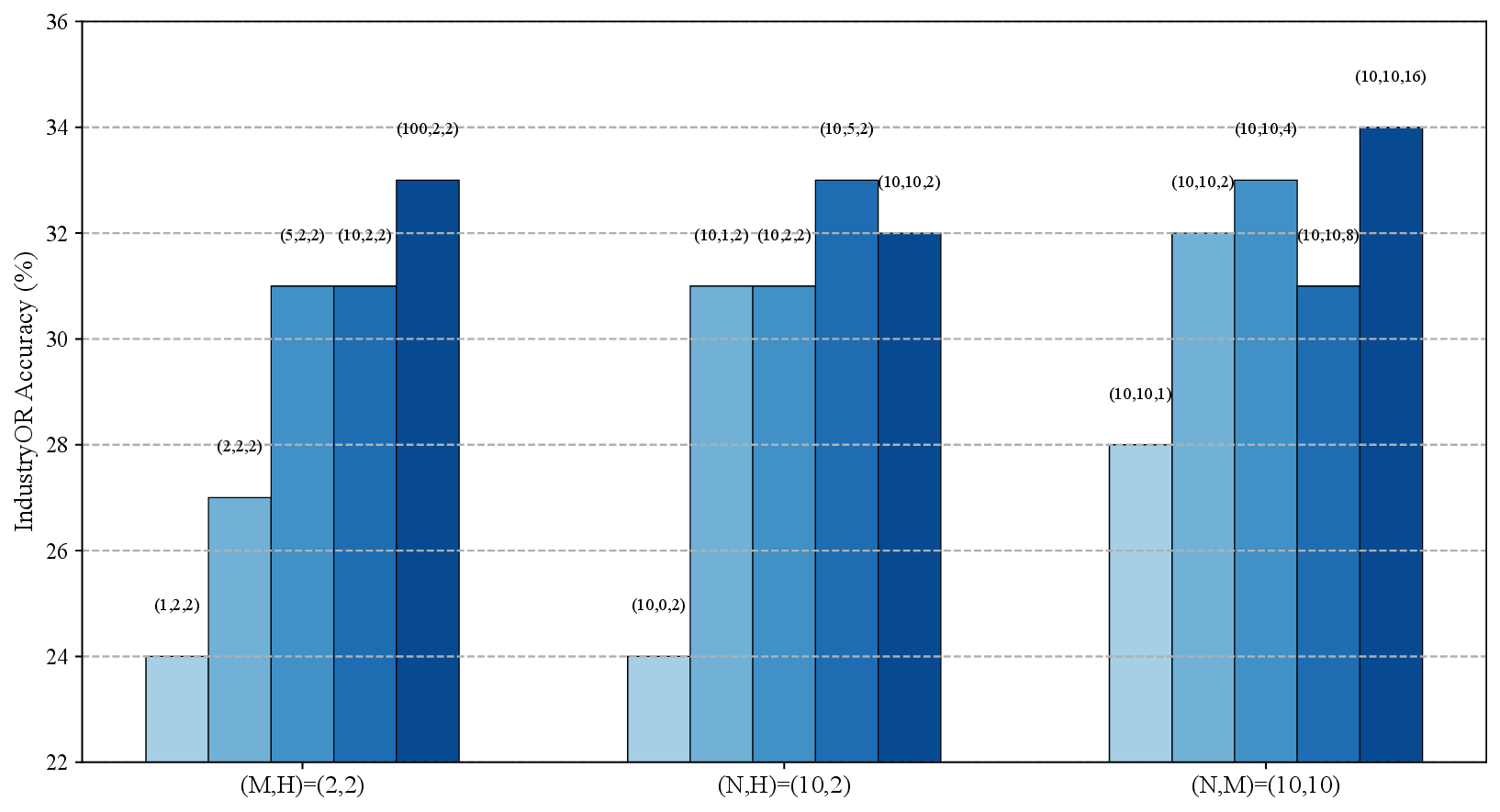}
  \caption{Pass@1 accuracy on IndustryOR using power reward sampling under various hyperparameter configurations. Each bar is labeled as $(N,M,H)$. The three groups show sensitivity to: (left) $N$ with $(M,H) = (2,2)$; (middle) $M$ with $(N,H) = (10,2)$; and (right) $H$ with $(N,M) = (10,10)$.}
  \label{fig:powerlaw_ablation}
\end{figure}

Figure \ref{fig:powerlaw_ablation} reports the Pass@1 accuracy of the power reward sampling method on the IndustryOR benchmark under different hyperparameter configurations. To assess the role of each hyperparameter separately, the three groups on the horizontal axis fix $(M,H)=(2,2)$, $(N,H)=(10,2)$, and $(N,M)=(10,10)$, respectively, while varying the remaining parameter.

The figure shows that performance improves as the budget increases from very small values to moderate values. For example, when $(M,H)=(2,2)$ is fixed, increasing $N$ from $1$ to $100$ raises the accuracy from $24\%$ to $33\%$. Similar gains are observed when varying $M$ with $(N,H)=(10,2)$ fixed, and when varying $H$ with $(N,M)=(10,10)$ fixed. This pattern indicates that the proposed method benefits from additional sampling and lookahead, especially when the initial budget is limited. At the same time, the improvement becomes noticeably smaller once $N$ and $M$ reach moderate levels. In particular, strong performance is already attained at configurations such as $(N,M,H)=(10,5,2)$, $(10,10,2)$, and $(10,10,4)$, while the best result is achieved at $(10,10,16)$. The gap between these moderate settings and the largest tested configurations is limited. This suggests that the effectiveness of the method does not depend on large values of $N$ or $M$.

This observation is important from a deployment perspective. In practical applications, increasing $N$ and $M$ directly raises computational cost and inference latency. The results show that the proposed method can achieve strong accuracy without requiring large candidate sets or extensive rollout budgets. Hence, the method offers a favorable balance between solution quality and computational burden, which makes it more suitable for real-world deployment in latency-sensitive settings.

\section{Conclusion}\label{sec:conclusion}
In this paper, we study how to improve the reliability of LLM-based formulation generation for OR tasks. The main difficulty in this setting is that formulation quality is determined by the consistency of the entire generation path, whereas conventional methods make only local decisions and therefore cannot adequately control trajectory-level risk. To address this issue, we propose a training-free inference framework that performs distribution control during generation. The key idea is to use lookahead-based rewards to evaluate the quality of partial trajectories and then apply importance resampling to shift probability mass toward more reliable continuation paths. In this way, the model is guided not only by immediate probabilities, but also by the downstream stability of the generated formulation. Across multiple benchmarks, the proposed method consistently improves formulation accuracy, while requiring only a moderate increase in inference cost.

For future work, the current framework measures uncertainty mainly through naive entropy. A natural next step is to incorporate semantic entropy \citep{farquhar2024detecting} into uncertainty estimation, so that the reward can better distinguish genuinely unstable trajectories from semantically equivalent ones. Moreover, the allocation of computational budget across $N$ and $M$ deserves further study. Since the method has a nested structure, with candidate generation and rollout evaluation interacting with each other, it may be possible to draw on ideas from the nested simulation literature \citep[e.g.,][]{Gordy2010,zhang2022bootstrap} to design principled budget allocation rules.

\bibliographystyle{chicago}
\bibliography{mybibfile}

\newpage

\appendix
\renewcommand{\appendixname}{Appendix~\Alph{section}}

\section{Proof of Theorem \ref{thm:prefix_ir_strong_full}}\label{app:proof}
\textbf{Proof:}
Denote $p_t(\cdot)=p(\cdot | x_{<t})$, $q_{t,H}(\cdot)=q_H(\cdot | x_{<t})$ and for each $M$,
\begin{align}\label{eq:densityMH}
q_{t,M,H}(a) = \frac{p_t(a)\phi_{M,H}(x_{<t}\oplus a)}{\sum_{a'\in\mathcal A_t} p_t(a')\phi_{M,H}(x_{<t}\oplus a')}, \quad a\in\mathcal A_t.
\end{align}
For any measurable set $A\subseteq\mathcal A_t$, let
\begin{align*}
R_{N,M}(A) = \frac{\sum_{i=1}^N q_{t,M,H}(x_t^i)/p_t(x_t^i)\1\{x_t^i\in A\}}{\sum_{j=1}^N q_{t,M,H}(x_t^j)/p_t(x_t^j)}.
\end{align*}
Then
\begin{align*}
\Pr(x_{t,N,M}\in A) = \E\left[\Pr\left(x_{t,N,M}\in A \mid x_t^1,\dots,x_t^N,\phi_{M,H}\right)\right] = \E\left[\sum_{i=1}^N w_i \1\{x_t^i\in A\}\right] = \E\left[R_{N,M}(A)\right],
\end{align*}
where $w_i ={\phi_{M,H}(x_{\leq t}^i)} / {\sum_{j=1}^N \phi_{M,H}(x_{\leq t}^j)}$.

Since $\mathcal A_t$ is finite and for every $a\in\mathcal A_t$, we have $\phi_H(x_{<t}\oplus a)>0$, we may define
\begin{align*}
c=\frac12\min_{a\in\mathcal A_t}\phi_H(x_{<t}\oplus a)>0, \quad C=2\max_{a\in\mathcal A_t}\phi_H(x_{<t}\oplus a)<\infty,
\end{align*}
and let $E_M=\left\{c\leq \phi_{M,H}(x_{<t}\oplus a)\leq C,\ \forall a\in\mathcal A_t\right\}$. Because for every $a\in\mathcal A_t$ we have $\phi_{M,H}(x_{<t}\oplus a)\overset{p}{\longrightarrow}\phi_H(x_{<t}\oplus a)$ as $M\to\infty$, and $\mathcal A_t$ is finite, this convergence is uniform over $a\in\mathcal A_t$. Hence, $\Pr(E_M^c)\longrightarrow 0$. Define
\begin{align*}
U_{N,M}(A)=\frac1N\sum_{i=1}^N \frac{q_{t,M,H}(x_t^i)}{p_t(x_t^i)}\1\{x_t^i\in A\}, \qquad V_{N,M}=\frac1N\sum_{j=1}^N \frac{q_{t,M,H}(x_t^j)}{p_t(x_t^j)}.
\end{align*}
Then $R_{N,M}(A)=U_{N,M}(A)/V_{N,M}$. Moreover, conditional on $\phi_{M,H}$,
\begin{align*}
\E\left[U_{N,M}(A)\mid \phi_{M,H}\right] &= \E\left[\frac{q_{t,M,H}(x_t)}{p_t(x_t)}\1\{x_t\in A\}|\phi_{M,H}\right] = \sum_{a\in A} q_{t,M,H}(a) = q_{t,M,H}(A),\\
\E\left[V_{N,M}\mid \phi_{M,H}\right] &= \E\left[\frac{q_{t,M,H}(x_t)}{p_t(x_t)}|\phi_{M,H}\right] = \sum_{a\in\mathcal A_t} q_{t,M,H}(a) = 1.
\end{align*}
On the event $E_M$, for every $a\in\mathcal A_t$, it follows from \eqref{eq:densityMH} that
\begin{align*}
\frac{c}{C}\leq \frac{q_{t,M,H}(a)}{p_t(a)} = \frac{\phi_{M,H}(x_{<t}\oplus a)}{\sum_{a'\in\mathcal A_t} p_t(a')\phi_{M,H}(x_{<t}\oplus a')} \leq \frac{C}{c}.
\end{align*}
Hence, conditional on $\phi_{M,H}$ and $E_M$, both random variables $\frac{q_{t,M,H}(x_t^i)}{p_t(x_t^i)}\1\{x_t^i\in A\}$ and $\frac{q_{t,M,H}(x_t^i)}{p_t(x_t^i)}$ are bounded by $C/c$. Therefore, by Hoeffding's inequality, for every $\delta>0$,
\begin{align*}
\Pr\left(\left|U_{N,M}(A)-q_{t,M,H}(A)\right|>\delta | \phi_{M,H},E_M\right)\leq 2\exp\left(-\frac{2N\delta^2 c^2}{C^2}\right),\\
\Pr\left(\left|V_{N,M}-1\right|>\delta | \phi_{M,H},E_M\right)\leq 2\exp\left(-\frac{2N\delta^2 c^2}{C^2}\right).
\end{align*}

For any $\epsilon > 0$, let $\delta_\epsilon=\min\{\frac12,\frac{\epsilon}{4}\}$. On the event $\{|U_{N,M}(A)-q_{t,M,H}(A)|\leq \delta_\epsilon,\ |V_{N,M}-1|\leq \delta_\epsilon\}$, we have $V_{N,M}\geq \frac{1}{2}$, and therefore
\begin{align*}
|R_{N,M}(A)-q_{t,M,H}(A)| &= \left|\frac{U_{N,M}(A)}{V_{N,M}}-q_{t,M,H}(A)\right| \leq \frac{|U_{N,M}(A)-q_{t,M,H}(A)|}{V_{N,M}} + q_{t,M,H}(A)\frac{|V_{N,M}-1|}{V_{N,M}}\\
 &\leq 2|U_{N,M}(A)-q_{t,M,H}(A)| + 2|V_{N,M}-1| \leq \epsilon.
\end{align*}
Thus,
\begin{align*}
\Pr\left(|R_{N,M}(A)-q_{t,M,H}(A)|>\epsilon\right)\leq 4\exp\left(-\frac{2N\delta_\epsilon^2 c^2}{C^2}\right)+\Pr(E_M^c).
\end{align*}
which implies $R_{N,M}(A)-q_{t,M,H}(A)\overset{p}{\longrightarrow}0$.
Because $0\leq R_{N,M}(A)\leq 1$ and $0\leq q_{t,M,H}(A)\leq 1$, it follows from dominate convergence theorem that $\E\left[\left|R_{N,M}(A)-q_{t,M,H}(A)\right|\right]\longrightarrow 0$.

Next, because $\phi_{M,H}(x_{<t}\oplus a)\overset{p}{\longrightarrow}\phi_H(x_{<t}\oplus a)$ uniformly over $a\in\mathcal A_t$, it follows from \eqref{eq:densityMH} that for every $A\subseteq\mathcal A_t$, $q_{t,M,H}(A)\overset{p}{\longrightarrow} q_{t,H}(A)$. Since $0\leq q_{t,M,H}(A)\leq 1$, boundedness again implies $\E\left[\left|q_{t,M,H}(A)-q_{t,H}(A)\right|\right]\longrightarrow 0$.
Therefore, as $N, M \to \infty$,
\begin{align*}
\left|\Pr(x_{t,N,M}\in A)-q_{t,H}(A)\right| &= \left|\E[R_{N,M}(A)]-q_{t,H}(A)\right| \\
&\leq \E\left[\left|R_{N,M}(A)-q_{t,M,H}(A)\right|\right] + \E\left[\left|q_{t,M,H}(A)-q_{t,H}(A)\right|\right]\\
&\longrightarrow 0.
\end{align*}
This implies $x_{t,N,M}\Rightarrow q_{t,H}(\cdot)$.

\section{Examples}\label{app:examples}
The following four responses represent the answers to the case study problem in Section \ref{sec:myopic_insuff} under different generation methods.

\begin{nicebox}{lightred}{1. Standard sampling}
\small

\begin{itemize}[leftmargin=1.3em,itemsep=0.25em]
    \item \textbf{Mathematical Model:} To solve this optimization problem for the bodybuilder's diet, we will construct a linear programming model with the objective of minimizing the total cost while satisfying the minimum requirements for protein, carbohydrates, and calories.
    
    \item \textbf{Decision Variables:} ``$x_i$: The quantity of the $i$th food chosen, where $i$ can be Steak, Tofu, Chicken, Broccoli, Rice, or Spinach.''
    
    \item \textbf{Non-negativity Constraints:} ``$x_i \geq 0,\ \forall i$ as we cannot choose a negative quantity of food.'' \badhl{[integer restriction missing]}
\end{itemize}

\textbf{\ldots}
\end{nicebox}

\begin{nicebox}{lightred}{2. Low-temperature sampling}
\small

\begin{itemize}[leftmargin=1.3em,itemsep=0.25em]
    \item \textbf{Mathematical Model:} To solve the dietitian's optimization problem, we will establish a linear programming model aimed at minimizing the cost of food while meeting the nutritional requirements of the bodybuilder.
    
    \item \textbf{Decision Variables:} ``$x_1, x_2, x_3, x_4, x_5, x_6$: Represent the quantities of Steak, Tofu, Chicken, Broccoli, Rice, and Spinach, respectively.''
    
    \item \textbf{Non-negativity Constraints:} ``$x_1, x_2, x_3, x_4, x_5, x_6 \geq 0$, as the quantities of food cannot be negative.'' \badhl{[integer restriction missing]}
\end{itemize}

\textbf{\ldots}
\end{nicebox}

\begin{nicebox}{lightgreen}{3. power reward sampling}
\small

\begin{itemize}[leftmargin=1.3em,itemsep=0.25em]
    \item \textbf{Mathematical Model:} To solve this optimization problem, we will create a linear programming model. The objective is to minimize the total cost while meeting the nutritional requirements.
    
    \item \textbf{Decision Variables:} ``$x_1, x_2, x_3, x_4, x_5, x_6$: The quantities of Steak, Tofu, Chicken, Broccoli, Rice, and Spinach, respectively.''
    
    \item \textbf{Non-negativity Constraints:} ``$x_1, x_2, x_3, x_4, x_5, x_6 \geq 0$ \goodhl{and should be integers} to ensure feasible integer solutions.''
\end{itemize}

\textbf{\ldots}
\end{nicebox}

\begin{nicebox}{lightgreen}{4. entropy reward sampling}
\small

\textbf{Answer excerpt}

\begin{itemize}[leftmargin=1.3em,itemsep=0.25em]
    \item \textbf{Mathematical Model:} To solve this diet optimization problem, we will establish a linear programming model. The objective is to minimize the total cost while meeting the nutritional requirements.
    
    \item \textbf{Decision Variables:} ``$x_1, x_2, x_3, x_4, x_5, x_6$: Represent the quantities of Steak, Tofu, Chicken, Broccoli, Rice, and Spinach, respectively.''
    
    \item \textbf{Non-negativity Constraints:} ``$x_1, x_2, x_3, x_4, x_5, x_6 \geq 0$ \goodhl{and should be integers} to ensure feasible integer solutions.''
\end{itemize}

\textbf{\ldots}
\end{nicebox}

\begin{nicebox}{lightorange}{Takeaway}
\small
The objective and nutritional constraints are largely consistent across all four answers. The crucial difference lies in the \key{variable domain}. In both power reward and entropy reward sampling, candidates without the integer restriction can still appear during generation. However, once the model omits the integrality requirement, the subsequent reasoning process is affected: since the semantically appropriate formulation should use integer-valued servings, the unconstrained formulation creates ambiguity and makes the continuation more uncertain. Our reward-based methods reduce the probability of selecting such candidates by favoring continuations with lower uncertainty and more coherent downstream reasoning, thereby improving the final modeling accuracy.
\end{nicebox}
\end{document}